\newcommand{\norm}[1]{|| #1 ||}
\newcommand{\abso}[1]{| #1 |}
\newcommand{\defeq}{\vcentcolon=}
\newcommand{\eqdef}{=\vcentcolon}
\newcommand{\ts}{\textsuperscript}
\newcommand{\E}{\mathbb{E}}
\newcommand{\R}{\mathbb{R}}
\newcommand{\Z}{\mathbb{Z}}
\newcommand{\dd}{\mathrm{d}}
\newcommand{\calA}{\mathcal{A}}
\newcommand{\calE}{\mathcal{E}}
\newcommand{\calJ}{\mathcal{J}}
\newcommand{\calH}{\mathcal{H}}
\newcommand{\calL}{\mathcal{L}}
\newcommand{\calM}{\mathcal{M}}
\newcommand{\calN}{\mathcal{N}}
\newcommand{\calD}{\mathcal{D}}
\newcommand{\calF}{\mathcal{F}}
\newcommand{\raug}{R^{\mathrm{aug}}}
\newcommand{\calU}{\mathcal{U}}
\newcommand{\sse}{\subseteq}
\DeclareMathOperator{\SO}{SO}
\DeclareMathOperator{\id}{id}
\DeclareMathOperator{\Homm}{Hom}
\newcommand{\sprod}[1]{\langle #1 \rangle}
\newcommand{\rhorot}{\rho^{\mathrm{rot}}}
\newcommand{\rhoind}{\rho^{\mathrm{ch}}}
\newcommand{\calLsym}{\calL^{\mathrm{sym}}}
\newcommand{\calLas}{\calL^{\mathrm{as}}}
\newcommand\disteq{\stackrel{\mathclap{\normalfont\mbox{d}}}{=}}
\newtheorem{theorem}{Theorem}[section]
\newtheorem{lemma}[theorem]{Lemma}
\newtheorem{proposition}[theorem]{Proposition}
\theoremstyle{remark}
\newtheorem{remark}{Remark}
\newtheorem{example}{Example}[section]
\theoremstyle{plain}
\newtheorem{assumption}{Assumption}
\newtheorem{definition}{Definition}
\let\cline\cmidrule
\abstract{
Recent work has shown that group equivariance emerges in ensembles of neural networks as the result of full augmentation in the limit of infinitely wide neural networks (neural tangent kernel limit). In this paper, we extend this result significantly. We provide a proof that this emergence does not depend on the neural tangent kernel limit at all. We also consider stochastic settings, and furthermore general architectures. For the latter, we provide a simple sufficient condition on the relation between the architecture and the action of the group for our results to hold. We validate our findings through simple numeric experiments.}
\title{Ensembles provably learn equivariance through data augmentation}
\author*{Oskar Nordenfors}
\author{Axel Flinth}
\affil{Department of Mathematics and Mathematical Statistics \\
Umeå University}
\affil{Linneaus väg 49, 901 87 Umeå}
\email{oskar.nordenfors, axel.flinth @umu.se}
\keywords{Neural networks, ensembles, equivariance}
\begin{document}
\maketitle

\section{Introduction}
Consider a learning task with an inherent symmetry. As an illustrative example, we can think of classifying images of, say, apples and pears. The classification should, of course, not change when the image is rotated, which means that the learning task is \emph{invariant} to rotations. If we segment the image, the apple segmentation mask should instead rotate with the image -- such tasks are called \emph{equivariant}. An a priori known symmetry like this is a strong inductive bias, which should be possible to use to improve performance of the model.

There are essentially two 'meta-approaches' to do the latter. The first is to, through data augmentation, ensure that the data respect the symmetry. In our example, one would add rotated versions of the training images to the dataset. This is versatile, simple, and effective. However, there are of course no guarantees that the model after training exactly obeys the symmetry, in particular on out-of-distribution data. To solve this problem, one can use the second meta-approach, namely, to use a model that inherently respects the symmetry. During the last half decade, an entire framework concerning so-called equivariant neural networks has emerged under the name of \emph{Geometric Deep Learning} (GDL) \cite{bronstein2021geometric}.

The relations between these two approaches are, from a theoretical point of view, still unclear: there is yet to emerge a definitive answer to the question of if and when the augmentation strategy is guaranteed to yield an equivariant model after training. Recently, an interesting discovery in this regard was made in \cite{gerken2024emergentequivariancedeepensembles}. In this paper, it was shown that \emph{ensembles} of \emph{infinitely wide} neural networks (in the neural tangent kernel limit \cite{jacot2018neural}) are equivariant \emph{in mean}, even though individual members of them are not. See Figure \ref{fig:ensemble} for an illustration. Although this asymptotic result is already interesting, it has some drawbacks. The proofs rely heavily on the simplified dynamics of the NTKs, and therefore do not work at all for network of finite width. Furthermore, the proof only works for finite groups, and is only formulated for the somewhat unrealistic gradient flow training. The purpose of this article is to show a similar result for more realistic scenarios.

\subsection{Literature review}


\label{sec:litrev}

The question of the effect of symmetries in the data on the training of neural networks has often been understood as a comparison question: Is it better to train a non-equivariant architecture on augmented (that is, artificially symmetric) data, or to use a manifestly equivariant architecture as, e.g. \cite{cohen2018generalGCNN,maron2018invariant,kondorGeneralizationEquivarianceConvolution2018,weiler2019general,finzi2021practical,fuchs2020se}? Empirical comparisons between the two approaches as part of experimental evaluations of manifestly equivariant architectures have been made more or less since the invention of equivariant networks -- more systematic investigations include \cite{gandikotaTrainingArchitectureHow2021,gerken22a}.

In \cite{chen2020group}, a group theoretical framework for data augmentation was developed, which has inspired many treatises of the question, including ours. Linear (i.e. kernel) models are studied in \cite{elesedy2021provably,mei2021learning,dao2019kernel}. \cite{lyle2019analysis,lyle2020benefits} study so-called \emph{feature-averaged} networks. So-called \emph{linear neural networks} -- i.e., neural networks with a linear activation function -- are studied in \cite{chen2023implicit} and \cite{duan2025understandinglearninginvariancedeep}. In these settings, equivalence between augmenting the data and restricting the networks can be proven. The same has not been established for bona fide neural networks with non-linear activation functions -- they are studied in \cite{nordenfors2024optimizationdynamicsequivariantaugmented}, but there, only local guarantees are derived. Similar results appear in \cite{simsek2021geometry} where the invariance of certain symmetric subspaces of permutations of the weights under gradient flow was demonstrated.

The work most closely related to ours is \cite{gerken2024emergentequivariancedeepensembles}. There, it was realized that equivariance emerges from augmentation in ensembles, and not in individual networks.  They provide a formal explanation of this only in the so-called \emph{neural tangent kernel (NTK) limit} \cite{jacot2018neural}. Importantly, the optimization dynamics of neural networks in the NTK-limit turn linear \cite{lee2019wide}, so that that setting is considerably simpler than the one we consider here. We should also mention the concurrent \cite{maass2024symmetries}, which concerns the behavior of machine learning models trained on augmented data in the \emph{mean-field limit}, which include ensembles. This work is also indirect in that it technically concerns so-called Wasserstein gradient flows of certain functionals, which are limits of the ensemble dynamics. In neither work are architectures other than fully connected ones treated.

Note that this work is \emph{not} about \emph{actively} adapting the training of a priori non-equivariant models to make them (approximately equivariant), such as in e.g. \cite{pertigkiozoglou2024improving,Ouderaa2024Learning}. Instead, this work is about showing that equivariance emerges \emph{without regularization} in ensembles.

One important line of work, which we will rely on for the presentation of our results, is the study of equivariant training algorithms from \cite{li2021convolutional} and \cite{brugiapaglia2022invariance}. In an idealized setting, this, for example, refers to gradient flow on symmetric data. This flow will then be an equivariant flow.

Equivariant flows have been studied before in the context of generative models \cite{kohler2020equivariant,katsman2021equivariant,garcia2021n}. In this paper, we use results from \cite{kohler2020equivariant} in a different context, namely, to study the flow of the \emph{parameters of the network }under (stochastic) gradient flow (descent).

\subsection{Contribution}

The main message of this paper is that the phenomenon of neural network ensembles being equivariant in mean is \emph{much more general} than \cite{gerken2024emergentequivariancedeepensembles} suggests. In fact, the emerging equivariance has very little to do with the tangent kernel limit. What instead matter is the equivariance of the \emph{training algorithm} (which applies to gradient descent methods), and symmetric initialization of the weights. We are here heavily inspired by the works \cite{li2021convolutional} and \cite{brugiapaglia2022invariance}, but we apply them in a more general setting than before. For example, instead of making an assumption of symmetric data as in \cite{brugiapaglia2022invariance}, we also let the data symmetry be a property of the training algorithm.

\paragraph{More involved architectures} Where previous works have been confined to MLPs, i.e. fully connected neural networks, we derive results in a framework (that the authors developed themselves in \cite{nordenfors2024optimizationdynamicsequivariantaugmented}) that is capable to handle most modern architectures (including transformers). The idea here is to model a neural network architecture as a fully connected architecture with weights confined to an affine subspace $\calL$ of the parameter space $\calH$. We call $\calL$ the \emph{architecture space}. Our results reveal that the \emph{geometry} of $\calL$ is key -- only when $\calL$ is \emph{invariant to an action} of the symmetry group are we able to show that ensembles become equivariant. Although we not prove theoretically that the latter condition is necessary, we do numerical experiments that show that when it is not satisfied, the ensembles become less equivariant.

\paragraph{Random augmentations} Another contribution is that we treat the case of mini-batch stochastic gradient descent with randomly drawn data augmentations. This is in contrast to e.g. \cite{gerken2024emergentequivariancedeepensembles}, where the data usually is assumed to contain the entire orbits under the group action of every element of the data set. Circumventing the latter allows us to treat infinite and finite groups in a unified manner.

\paragraph{Finite ensembles} The main results of the paper only apply to the true mean of the ensemble, which is only realized in a theoretical infinite-ensemble-member limit. In practice, one of course need to resort to sample approximations of this mean. We therefore derive basic results on the sample complexity needed to achieve approximate equivariance. This contribution is minor and not the focus of the work -- the bounds are simple, but easy to interpret: We show that for a finite group, $O(\log(\vert G\vert))$ ensemble members are enough to realize the equivariance approximately for a given element $x$. For a connected Lie group, we show that $O(\mathrm{dim}(G))$ sample suffice.

\paragraph{Limitations} We make the same global assumptions as in \cite{nordenfors2024optimizationdynamicsequivariantaugmented}; The symmetry group needs to be compact (excluding e.g. the group of rigid motions $\mathrm{SE}(d)$), the augmentations are done according to the Haar measure on the group (meaning that they are uniform), and also that all non-linearities are equivariant with respect to group actions on the intermediate spaces of the network. The latter requirement is however satisfied in many natural architecture.

\paragraph{Paper outline} In Section \ref{sec:prel}, we define our setting, and in particular introduce equivariant training algorithms. We also show the abstract result Lemma \label{lem:equiensemble}: An ensemble of neural networks trained with an equivariant training algorithm becomes equivariant. In Section \ref{sec:calL}, we present the architecture space formalism, and prove some important lemmas. In \ref{sec:main}, we then prove the main results: Gradient flow with fully augmented data, and SGD with random augmentations, yield equivariant ensemble as long as the architecture space of the neural network is compatible with the group symmetry. We also show our concentration results for finite ensembles. In \ref{sec:exp}, we validate our findings with some simple numerical experiments.

\begin{figure}[!h]
    \centering
    \includegraphics[width=.9\linewidth]{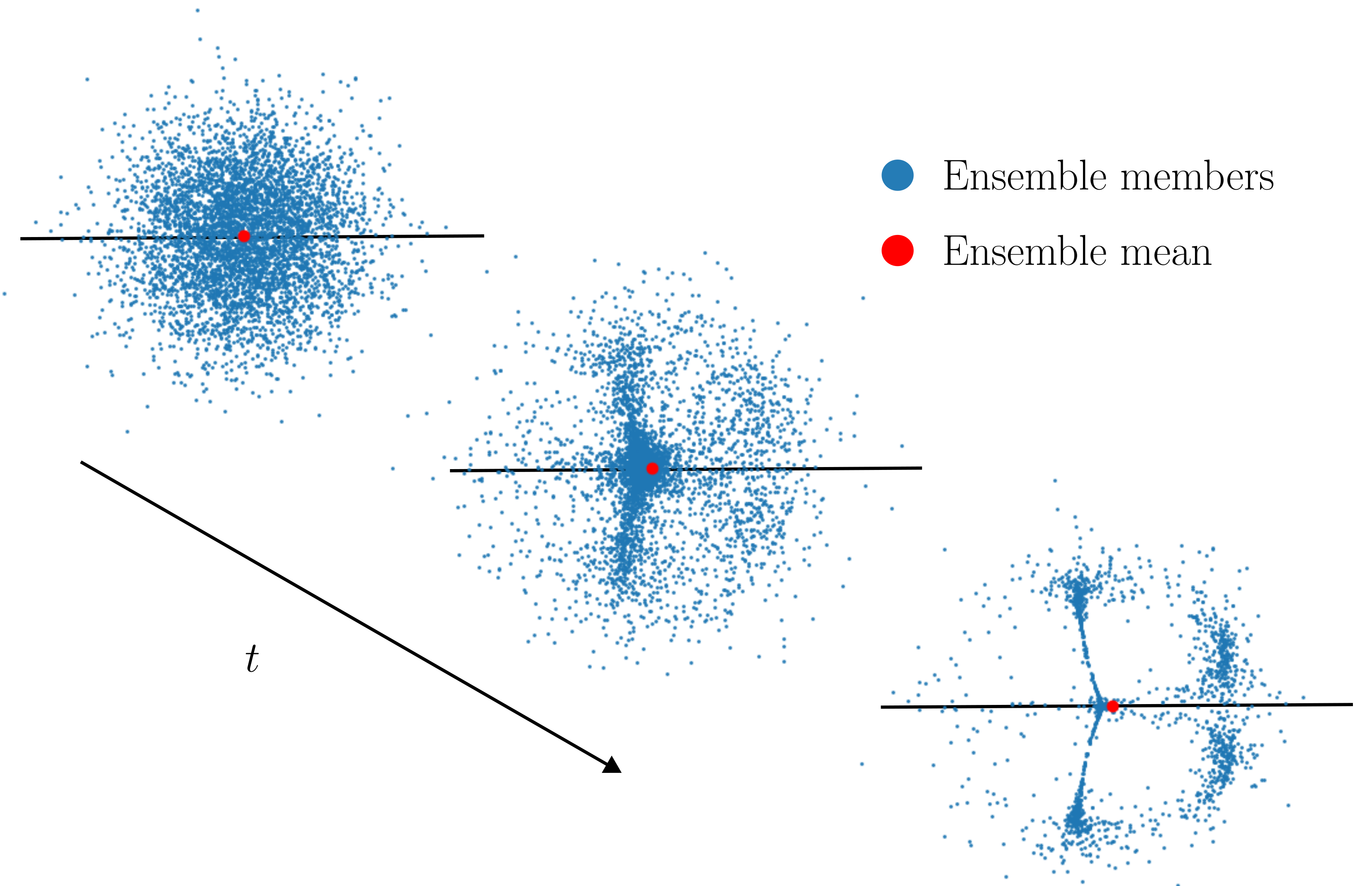}    
    \caption{A graphical illustration of our results. The symmetry group here is $C_2$,  acting on the parameter space through reflection in the $x$-axis, so that the $x$-axis correspond to equivariant models. Snapshots of the parameters of ensemble members as they are trained on symmetric data are shown. At all times, most individual ensemble members do not lie on the line of symmetric models. However, their distribution always is symmetric about the $x$-axis -- and therefore, \emph{their mean} always corresponds to an equivariant model. \label{fig:ensemble} }
    
\end{figure}

\section{Preliminaries}\label{sec:prel}
Let $X$ and $Y$ be finite-dimensional vector spaces. We are concerned with neural networks $\Phi_A:X\to Y$, that are defined as function of the form 
\begin{align}\label{eq:raug}
    x_0\in X, \quad x_{i+1} = \sigma_i(A_i(x_i)), \, \forall i\in\{0,1,\ldots, N-1\}, \, \Phi_A(x_0)=x_N\in Y.
\end{align}
Here, $A_i$ are trainable linear maps, i.e. elements of $\Homm(X_i,X_{i+1})$, and $\sigma_i$ are fixed non-linearities.  We write $\calH=\bigoplus_{i=0}^{N-1}\Homm(X_i,X_{i+1})$ for the ambient space for the parameter $A=(A_i)_{i=0}^{N-1}$.

Let us proceed as in the GDL framework \cite{bronstein2021geometric} and define equivariance with respect to representations of a group. We assume that a group $G$ is acting on the spaces $X_i$ through some \emph{unitary representations} $\rho_i$ \cite{fulton2004representation} -- i.e. maps $\rho_i$ from the group $G$ to the set of unitary matrices that respect the group action: $\rho_i(gh)=\rho_i(g)\rho_i(h)$. Importantly, we can lift these representations to the space of weights.
\begin{example}[Trivial representation]
    The simplest possible representation on any space is the \emph{trivial} one, that maps every element $g\in G$ to the identity, $\rho^{\mathrm{triv}}(g)=\mathrm{id}$. In other words, $\rho^{\mathrm{triv}}(g)x = x$ for all $g\in G$. 
\end{example}

\begin{example}[Discrete rotations of images]\label{ex:rhorot}
    Let $G=C_4$ act on the space of $n\times n$ pixel images by representation $\rhorot$, which takes the input image and rotates it a multiple of 90 degrees. In other words, if $x:[n]^2\to \R$ is a grayscale image, then $\rhorot$ acts by
    $$
    \rhorot(g)x[i] = x[\pi_g^{-1}i],
    $$
    where $\pi_g$ is the permutation of the pixels corresponding to the rotation $g$. If we have a tuple of images, then $\rhorot$ acts by rotating each entry in the tuple in the same way.
\end{example}

\begin{definition}[Representation on $\Homm$]
    Given unitary representations $\rho_i$ on $X_i$, we define representations $\widehat{\rho}_i$ on $\Homm(X_i,X_{i+1})$ through
    \begin{align*}
        \widehat{\rho}_i(g)(A_i) = \rho_{i+1}(g) \circ A_i \circ \rho_i(g)^{-1}
    \end{align*}
    and a representation $\rho$ on $\calH$ as the direct sum of these. We will refer to these representations as the \emph{lifted representations}.
\end{definition}

\begin{example}
    If $G$ acts trivially on all spaces, it is not hard to see that the lifted representation $\rho$ is also trivial.
\end{example}

\begin{remark} \label{rem:free}
    While the representations on the input and output space are given by the symmetries of the problem and hence are fixed, the representations on the intermediate spaces are a priori free to choose. This point has subtle but important consequences for the meaning of our main results -- we will discuss them when we present it.
\end{remark}

We call a map $f: X\to Y$ $\rho$-\emph{equivariant} if $f\circ \rho_0(g) = \rho_N(g) \circ f$ for all $g$. The space of $\rho$-equivariant linear maps between $X_i$ and $X_{i+1}$ is denoted $\Homm_G(X_i,X_{i+1})$. We write $\calH_G=\bigoplus \Homm_G(X_i,X_{i+1})$, i.e., $\calH_G=\{A\in\calH\vcentcolon \rho(g)A=A\}$.

We will note two assumptions we will make throughout the paper. These assumptions also appear in \cite{nordenfors2024optimizationdynamicsequivariantaugmented}.
\begin{assumption}\label{ass:sigma}
    The non-linearities $\sigma_i$ are all $\rho$-equivariant.
\end{assumption}
Note that without this assumption the strategy of making a neural network equivariant by restricting the linear layers to be equivariant fails.

\begin{assumption}\label{ass:compact}
    The group $G$ is compact.
\end{assumption}

This assumption is needed to ensure that the Haar measure of the group \cite{krantz2008geometric} is (normalizeable to) a probability measure. It should also be noted that when $G$ is compact, it is no true restriction to assume that the representation is unitary -- one can always ensure this by redefining the inner products on $\calH$. The assumption is satisfied by all finite groups, but also groups like $\SO(n)$ and $\mathrm{O}(n)$ (albeit not the group of all rigid motions $\mathrm{SE}(n)$.)

\subsection{Training algorithms}
\label{train}
We will assume that we are given training data $(x,y)$ distributed according to some distribution $\calD$ on $X\times Y$. To produce a model from the training data we apply a training algorithm. We will use the definition of training algorithms and iterative training algorithms from \cite{li2021convolutional}.

\begin{definition}[Training algorithm]
    A \emph{training algorithm} $\calA$ is a mapping from a sample $ D=\{(x_i,y_i)\}_{i=1}^n$ from $\calD$ to a distribution $p$ on the space of neural networks $\calM=\{\Phi_A:X\to Y\}$. For a deterministic training algorithm, the output is instead simply a neural network.
\end{definition}

Most training algorithms used in practice are iterative, in the sense that they are defined by repeatedly applying some fixed  update rule $U$, that may have random components.  In other words, they are Markov chains dependent on the network architecture and data:
\begin{align*}
    \mathbb{P}(U(A^t,\Phi_{\cdot},D)\in S \, \vert \, A_t) = \int_{M} u_{\Phi,D}(A^t, a) \dd a,
\end{align*}
where the transition function $u_{\Phi,D}:\calL \times \calL$ is dependent on the data and the neural network architecture.
\begin{definition}[Iterative training algorithm]
    Given the following data:
    \begin{itemize}
        \item A sample $D$ from $\calD$
        \item An initial weight distribution $p_0$;
        \item A number of iterations $T$;
        \item A neural network architecture $\Phi_\cdot:\calL\to\calM$.
        \item An update rule $U$.
    \end{itemize}
    we define an \emph{iterative training algorithm} $\calA$ by the procedure:
    \begin{enumerate}
        \item Initialize $A^0\sim p_0$.
        \item For $t\in \{0,1,\ldots, T\}$, calculate recursively
        \begin{equation*}
            A^{t+1}=U(A^t,\Phi_\cdot,D)
        \end{equation*}
        \item Return $\Phi_{A^T}$.
    \end{enumerate}
\end{definition}
\begin{remark}\label{rem:iteratedupdate}
    Denoting by $\calU^t(D)$ the map applying the first $t$ iterations of the update rule, we have that if $A^T\disteq \calU^T(D)$, then $\Phi_{A^T} \disteq \calA (D)$.
\end{remark}
To achieve an equivariant model, it is not enough to merely consider training algorithms; we must consider \emph{equivariant} training algorithms. For a training sample $D=\{(x_i,y_i)\}_{i=1}^n$, let, for every $g\in G$, $gD\defeq \{(\rho_0(g)x_i,\rho_N(g)y_i)\}_{i=1}^n$ denote the transformed training sample.

\begin{definition}[Equivariant training algorithm]\label{def:eqvalgo}
    We say that a training algorithm $\calA$ is \emph{equivariant} if
    \begin{equation*}
        \rho_N(g)^{-1}\circ\calA(gD)\circ \rho_0(g) \disteq \calA(D), \quad \text{for all }g\in G, \,D\in (X\times Y)^n,
    \end{equation*}
\end{definition}

It is possible to show that iterative training algorithms are equivariant if the update is equivariant. To show this, we need the following Lemma from \cite{nordenfors2024optimizationdynamicsequivariantaugmented}.
\begin{lemma}[Equivariance to joint transformation]\label{lem:jointequi}
    Let $\Phi_A$ be a neural network, then, under Assumption (1) that all non-linearities are equivariant, we have that,
    \begin{align}
        \Phi_A(\rho_0(g)x)=\rho_N(g)\Phi_{\rho(g^{-1})A}(x),
    \end{align}
    for every $g\in G$ and $x\in X$.
\end{lemma}
\begin{proof}
    The proof is by induction on the number of layers $N$. To this end, denote by $\Phi^N_A$ the network given by the first $N$ layers of $\Phi_A$. For $N=1$ we have
    \begin{align*}
        \rho_1(g)\Phi^1_{\rho(g^{-1})A}(x) &=\rho_1(g)\sigma_0(\rho_1(g^{-1})A_0\rho_0(g^{-1})^{-1} x)= \sigma_0(\rho_1(g)\rho_1(g^{-1})A_0\rho_0(g^{-1})^{-1}x)\\
        &=\sigma_0(\id A_0\rho_0(g) x)=\Phi^1_A(\rho_0(g)x), \quad \forall g\in G,
    \end{align*}
    where the first equality is by definition of $\rho(g)$, the second equality is by $\rho$-equivariance of $\sigma_0$, and the third equality follows since $\rho_i(g)$ is a representation, $\forall i \in \{0,1\}$, $\forall g\in G$.

    Now, assume that $\Phi^K_A(\rho_0(g) x)=\rho_K(g)\Phi^K_{\rho(g^{-1})A}(x)$, $\forall g \in G$. We want to show that $\Phi^{K+1}_A(\rho_0(g) x)=\rho_{K+1}(g)\Phi^{K+1}_{\rho(g^{-1})A}(x)$, $\forall g \in G$. We have
    \begin{align*}
        \rho_{K+1}(g)\Phi^{K+1}_{\rho(g^{-1})A}(x)&= \rho_{K+1}(g)\sigma_{K}(\widehat{\rho}_K(g^{-1})A_{K}\Phi^K_{\rho(g^{-1})A}(x))\\
        &=\rho_{K+1}(g)\sigma_{K}(\rho_{K+1}(g^{-1})A_{K}\rho_{K}(g^{-1})^{-1}\Phi^K_{\rho(g^{-1})A}(x))\\
        &=\sigma_{K}(\rho_{K+1}(g)\rho_{K+1}(g^{-1})A_{K}\rho_{K}(g^{-1})^{-1}\Phi^K_{\rho(g^{-1})A}(x))\\
        &=\sigma_{K}(\id A_{K}\rho_{K}(g)\Phi^K_{\rho(g^{-1})A}(x))\\
        &=\sigma_{K}( A_{K}\Phi^K_{A}(\rho_0(g)x))=\Phi^{K+1}_{A}(\rho_0(g)x),\quad \forall g\in G,
    \end{align*}
    where the first equality is by the definition of $\Phi_A$, the second equality is by the definition of $\widehat{\rho}_K(g)$, the third equality follows by $\rho$-equivariance of $\sigma_K$, the fourth equality follows since $\rho_i(g)$ is a representation, $\forall i \in \{K,K+1\}$, $\forall g\in G$, the fifth equality follows by the inductive assumption, and the final equality is just the definition of $\Phi^{K+1}$.

    By induction $\Phi_A(\rho_0(g) x)=\rho_N(g)\Phi_{\rho(g^{-1})A}(x)$, $\forall g \in G$.
\end{proof}

This lemma says that if  the non-linearities of our network are equivariant, then the entire network is equivariant with respect to the group representation $\rho_\calJ$ on $\calH\oplus X$ defined by $\Phi(\rho_\calJ(g)(A,x))=\Phi_{\rho(g)A}(\rho_0(g)x)$ and the representation $\rho_N$ on the output space. This can be seen as a relaxed kind of equivariance. Note that if $\rho(g)A=A$, $\Phi_A$ becomes equivariant -- this is, somewhat streamlined, the core principle behind GDL. Now, note that this holds 'in distribution' as well: If $A\disteq \rho(g)A$ for every $g\in G$, then
$$
\Phi_A(\rho_0(g) x)=\rho_N(g)\Phi_{\rho(g^{-1})A}(x)\disteq \rho_N(g)\Phi_{A}(x) , \quad \forall g \in G.
$$
 We can use this idea to prove the stochastic version of Theorem C.1 from \cite{li2021convolutional}.
\begin{lemma}[Equivariance of update step implies equivariance of training algorithm]\label{lem:equiupdate} 
    Let $\calA$ be an iterative training algorithm. If the update rule satisfies
    \begin{align}\label{eq:equiupdate}
        \rho(g)U(A^t,\Phi_{\cdot}, D) \disteq U(\rho(g)A^t,\Phi_{\cdot}, gD), 
    \end{align}
    and the initial weight distribution satisfies $p_0 = p_0\circ \rho(g)^{-1}$, for every $g\in G$, then $\calA$ is an equivariant training algorithm.
\end{lemma}

\begin{proof}
    The proof is by induction. Let $\calU^t$ denote the first $t$ iterations of the iterative update rule $U$, so that if $A^T\disteq \calU^T(D)$, then $\Phi_{A^T}\disteq \calA(D)$. The base case is just the invariance assumption: $\rho(g)A^0\disteq A^0$. 
    
    For the inductive step, assume that $\rho(g)A^t\disteq \calU^t(gD)$, then we have
    \begin{align*}
        \rho(g)A^{t+1}&=\rho(g)U(A^t,\Phi_{\cdot}, D) \\&\disteq U(\rho(g)A^t,\Phi_{\cdot}, gD) \disteq \calU^{t+1}(gD),
    \end{align*}
  where the last equality is by inductive assumption. By induction, $\Phi_{\rho(g)A^T}\disteq \calA (gD)$. It follows by Lemma \ref{lem:jointequi} that
    \begin{align*}
        \calA(D)\circ \rho_0(g)^{-1}=\Phi_{A^T}\circ \rho_0(g)^{-1} = \rho_N(g)^{-1}\circ\Phi_{\rho(g)A^T} \disteq  \rho_N(g)^{-1}\circ\calA (gD),
    \end{align*}
    which is what was to be shown.
\end{proof}

\subsection{Ensembles}

Ensembling refers to the practice of training a number of independent base models, and then averaging them.  In our setting, we may define them as follows:
\begin{align}\label{eq:ensemble}
    \overline{\Phi}^t(x)=\E[\Phi_{A^t}](x), \quad \Phi_{A^t} \disteq \calA(D).
\end{align}
When the training algorithm is gradient flow, this definition is the same as in \cite{gerken2024emergentequivariancedeepensembles}. 

One should note that this is an idealization. In practice, one needs to resort to a sample mean of the distribution $\calA(D)$ via initializing $M$ neural networks (according to the initialization defined by $\calA$), training them independently, and then averaging them after $t$ steps. This is the strategy we will follow in the experimental section. The bulk of our claims will be about the the true mean, but will get back to the effects of finite sampling in Section \ref{sec:finite}. 

If an ensemble is trained with an equivariant algorithm on symmetric data, its mean will be equivariant. More specifically, we have the following.
\begin{theorem}[Equivariant training leads to equivariant ensembles]\label{thm:equiensemble}
    Assume that an ensemble model is trained with an equivariant training algorithm $\calA$ on a dataset $D$ that satisfies \begin{align}\label{eq:symdata}\calA(D) \disteq \calA(gD),\end{align}for every $g\in G$. Then, the ensemble model is equivariant.
\end{theorem}
\begin{proof}
    Let $\calA$ be an equivariant training algorithm and let $\Phi_{A^t}\disteq\calA(\{x_i,y_i\}_{i=1}^n)$. Then, by Equation \ref{eq:symdata}, we have,
\begin{align*}
    \Phi_{A^t} \circ \rho_0(g) \disteq \rho_N(g) \circ \Phi_{A^t}.
\end{align*}
Taking the expectation of both sides yields
\begin{align*}
    (\E[\Phi_{A^t}]\circ\rho_0(g)) (x) = (\rho_N(g)\circ\E[\Phi_{A^t}])(x),
\end{align*}
for every $g\in G$ and $x\in X$. This is the same as,
\begin{align*}
    \overline{\Phi}_t(\rho_0(g)x) = \rho_N(g)\overline{\Phi}_t(x).
\end{align*}
\end{proof}

\begin{remark} \label{rem:symmetric_data}
     If the dataset is symmetric, that is if $$D\disteq gD,$$ then the condition $$\calA(D) \disteq \calA(gD)$$ holds trivially. This observation was made in \cite{brugiapaglia2022invariance}.  
\end{remark}

\begin{lemma}(Data symmetric update yields data symmetric algorithm)\label{lem:datasymupd}
    Let $\calA$ be an iterative training algorithm with update rule $U$.  If $U$ satisfies \begin{align}\label{eq:datasymupdate}U(A^t,\Phi_\cdot,D)\disteq U(A^t,\Phi_\cdot,gD),\end{align} for every $g\in G$ and every $D$, then $\calA$ satisfies $\calA(D) \disteq \calA(gD)$ for every $g\in G$ and every $D$.
\end{lemma}
\begin{proof}
    We proceed by induction. Let $\calA$ be an iterative training algorithm with an update rule $U$ satisfying equation \ref{eq:datasymupdate}. Letting $\calU^t$ denote the first $t$ iterations of the iterative update rule $U$, we have that
    \begin{align*}
        \calU^1(D) = U(A^0, \Phi_{\cdot}, D) \disteq U(A^0, \Phi_{\cdot}, gD) =\calU^1(gD),
    \end{align*}
    by Equation \ref{eq:datasymupdate}. For the inductive step, if $\calU^t(D) \disteq \calU^t(gD)$, then
    \begin{align*}
        \calU^{t+1}(D) \disteq U(\calU^t(D),\Phi_{\cdot}, D)\disteq U(\calU^t(gD),\Phi_{\cdot}, D) \disteq U(\calU^t(gD),\Phi_{\cdot}, gD)\disteq \calU^{t+1}(gD),
    \end{align*}
    where the first and last equalities are by definition, the second equality by the inductive assumption, and the third equality is by Equation \ref{eq:datasymupdate}. The statement now follows by induction.
\end{proof}
Thus, we see that symmetric data is not a necessity for Equation \ref{eq:symdata} to hold -- it suffices that the update rule does not change if we transform the data. In Section \ref{sec:main}, we will show that mini-batch stochastic gradient descent with data batches that are augmented 'on the fly'.





\section{The architecture space formalism} \label{sec:calL}
 The model defined in Section \ref{sec:prel} is really only a faithful model for fully-connected feedforward neural networks without bias. As was realized in \cite{nordenfors2024optimizationdynamicsequivariantaugmented}, we may conveniently include other architectures in the framework (biases, CNN:s, RNN:s, transformers,~\dots) by confining the tuple of linear maps to an \emph{affine subspace} $\calL \sse \calH$ which we will refer to as the \emph{architecture space}. Let recall and motivate this definition by giving some examples of architecture spaces, including one that we will use as a running example in the following.
\begin{example}[Trivial architecture space]
    In a fully connected network without bias, all layers are linear, and the weights are not constrained at all. Hence, MLP:s without bias have architecture space $\calL=\calH$. We refer to this as the \emph{trivial architecture space}.
\end{example}

\begin{example}[Architectures with bias]
    Biases can be included by the standard 'homogenization' trick -- add a 'dummy feature' to all intermediate spaces $\widetilde{X}_i= X_i \oplus \mathbb{R}$, then restrict the maps between $\widetilde{X}_i$ and $\widetilde{X}_{i+1}$ to the set
    \begin{align*}
        \mathcal{B}_i = \bigg\{ H_{A,b} = \begin{bmatrix}
            A & b \\ 0  & 1
        \end{bmatrix} \, \big\vert \, A \in \Homm(X_i,X_{i+1}) , b\in X_{i+1} \bigg\}.
    \end{align*}
    The linear maps in $\mathcal{B}_i$ then emulate the action of affine maps: $H_{A,b}(x,1) = (Ax+b,1)$. The architecture space of the net as a whole is then a direct sum: $\calL= \oplus_{i\in [L]} \mathcal{B}_i$. Note that it is always possible to construct an architecture space 'layer-wise' in this manner.
\end{example}

\begin{figure}[!h]
\centering
        \begin{tikzpicture}[scale=.39]
            \draw[step=1cm] (0,0) grid (3,3);
            \draw[fill=gray!50] (1,1) rectangle (2,2);
            \draw[fill=gray!50] (0,1) rectangle (1,2);
            \draw[fill=gray!50] (2,1) rectangle (3,2);
            \draw[fill=gray!50] (1,0) rectangle (2,1);
            \draw[fill=gray!50] (1,2) rectangle (2,3);
            \draw[step=1cm] (6,0) grid (9,3);
            \draw[fill=gray!50] (6,2) rectangle (7,3);
            \draw[fill=gray!50] (6,1) rectangle (7,2);
            \draw[fill=gray!50] (8,1) rectangle (9,2);
            \draw[fill=gray!50] (7,0) rectangle (8,1);
            \draw[fill=gray!50] (7,2) rectangle (8,3);
        \end{tikzpicture}
        \caption{Symmetric (left) and asymmetric (right) filter. Indices in the support are grey.}
        \label{fig:supports}
\end{figure}
\begin{example}[Running example: Convolutional layers]    \label{ex:liftedconv}

    For some integers $d_i$, let $X_i = (\R^{n,n})^{d_i}$ denote spaces of tuples of images. We may define a \emph{convolutional operator} between two such spaces as operators $C$ of the form
    \begin{align*}
        (Cx)_k = \sum_{\ell \in [d_i]} \psi_{k\ell}*x_\ell,
    \end{align*}
    where $\psi_{k\ell}$, $k \in [d_{i+1}], \ell \in [d_i]$ are filters. Let us denote this operator by $\sprod{\psi}$. 
    The space of all operators of this form, where the filters supports are all contained in a fixed set $\Omega$, form a linear (and therefore affine) subspace $\calL$ of $\calH$. The corresponding architecture is a CNN.

    Common choices for $\Omega$ are $3\times 3$ or $5\times 5$ squares. Just as was done in \cite{nordenfors2024optimizationdynamicsequivariantaugmented}, we will additionally here consider the two non-canonical supports in Figure \ref{fig:supports}, since they illustrate important aspects of our results. We call the left support the \emph{symmetric} support, and the right the \emph{asymmetric} support, and denote the corresponding architecture spaces $\calLsym$ and $\calLas$.

\end{example}
As is carefully presented in \cite{nordenfors2024optimizationdynamicsequivariantaugmented}, many more architectures can be thought of as an MLP with weights confined in an architecture space $\calL$, including e.g. RNNs and transformers. Importantly, the non-linearities $\sigma$ often need to be chosen in a non-standard way (in for instance transformers, $\sigma$ is responsible for the attention mechanism). We refer to the appendix of said article for detailed arguments.

Let us now state an assumption we will make throughout the text.
\begin{assumption}\label{ass:nonempty}
    The space $\calE = \calL\cap\calH_G$ is non-empty.
\end{assumption}

This assumption ensures that there exists an admissible and $\rho$-equivariant architecture. It also has a technical consequence that we will need: it allows us to write the architecture space $\calL=A_\calL+\mathrm{T}\calL$, for an $A_\calL \in \calH_G\cap \calL$, and $\mathrm{T}\calL$  the tangent space of $\calL$. 

\subsection{The invariance criterion}
In \cite{nordenfors2024optimizationdynamicsequivariantaugmented}, neural network training with augmented data was compared with the 'layerwise equivarization strategy' from geometric deep learning. The latter, in our language, simply means confining the layers $A_i$ to be equivariant with respect to the intermediate representations, i.e. $\calH_G$. It turned out that the sets $\calL\cap \calH_G$ are invariant under augmented training dynamics \emph{under some additional geometric assumptions on $\calL$}. The story in this paper will be the same: In order for ensembling to yield equivariance, $\calL$ needs to be compatible with the lifted representation, in the following manner.  
\begin{definition} \label{defi:ginv}
    Let $\calL\subseteq \calH$. The space $\calL$ is $\rho$-invariant if $\rho(g)\calL \sse \calL$ for all $g\in G$.
\end{definition}
Intuitively, $\rho$-invariance of $\calL$ can be interpreted as the corresponding architecture not being \emph{inherently biased} against being $\rho$-equivariant.

The condition of $\rho$-invariance was again already used in \cite{nordenfors2024optimizationdynamicsequivariantaugmented}, to derive the so-called \emph{compatibility condition} (meaning that $\Pi_\calL$ commutes with the orthogonal projection $\Pi_G$ onto $\calH_G$) needed to establish the results of that paper. Many examples of $\rho$-invariant architecture spaces corresponding to commonly used architectures and reasonable group representations were given.  Examples included fully architectures connected ones with and without bias, RNNs, residual connections, etc. Let us here discuss a few important examples.

\begin{example}[Convolutions]\label{ex:convcond}
    Consider the $\calL$ corresponding to a convolutional architecture, and let $C_4$ act through $\rhorot$ (see Example \ref{ex:rhorot}) on all intermediate spaces. It is not hard to show that the the corresponding lifted representation acts by rotating each filter, $\rho(g) \sprod{\psi} = \sprod{\rhorot(g)\psi}$. (For a formal proof, see for example \cite{nordenfors2024optimizationdynamicsequivariantaugmented}). Consequently, a $\sprod{\psi} \in \calLsym$ will be mapped by $\rho$ onto a tuple of other symmetric filters, so that $\calLsym$ is $\rho$-invariant. By the same argument, $\calLas$ is not.
     As we previously remarked (Remark \ref{rem:free}), there is however no need to choose the $\rhorot$-representations in the intermediate layers. However, it turns out that representations on the intermediate spaces making $\calLas$ $\rho$-invariant do not exist. The somewhat involved proof for this is given in Appendix \ref{app:skew}.
\end{example}
\begin{example}[Trivial architecture spaces and representations]\label{ex:trivarch} The trivial architecture space is invariant independently of $G$ and $\rho$, since we of course always have $\rho(g) \calH \sse \calH$, by the virtue of $\rho$ being a representation. On the other hand, if the lifted representation is trivial, any architecture space $\calL$ will be $\rho$-invariant, since  $\rho^{\mathrm{triv}}(g)\calL =\mathrm{id}(\calL) = \calL$ for all $g\in G$ in that case. 
\end{example}

\begin{example}[Pre- and post MLP:s] \label{ex:prepost}
    In fact, the observations in the previous examples one can be used to argue that any architecture in a certain sense can be modified into an invariant one. Let us assume that an architecture with architecture space $\calL$ for parametrizing functions $X \to Y$ is given. The representations of the group on $X$ and $Y$, $\rho_0$ and $\rho_N$, are fixed, but as we remarked in \ref{rem:free}, the ones on the intermediate spaces are not. Now let us modify our architecture as follows: Instead of directly inputting the data in $X$ into the network, we pre-compose it with a fully connected MLP $X\to X$. In the same manner, we do not use the output of the network directly, but post-compose it with a fully connected MLP $Y\to Y$. The architecture space of the resulting architecture gets the form $\calL^{\mathrm{extended}} =\calL^{\mathrm{pre}} \oplus \calL \oplus \calL^{\mathrm{post}}$. 

    Now, we choose the intermediate representations $\rho_i$ on all intermediate spaces to be trivial. It is then not hard to see that the lifted representation $\rho$ on $\calL^{\mathrm{extended}}$ decomposes as $\rho^{\mathrm{pre}} \oplus \rho^{\mathrm{triv}} \oplus \rho^{\mathrm{post}}$, where $\rho^{\mathrm{pre}}$ and $\rho^{\mathrm{post}}$ are some possibly non-trivial representations. Now, since $\calL^{\mathrm{pre}}$ and $\calL^{\mathrm{post}}$ are trivial architecture spaces, they  will be invariant under $\rho^{\mathrm{pre}}$ and $\rho^{\mathrm{post}}$, respectively. On the other hand, $\calL$ will of course be invariant under $\rho^{\mathrm{triv}}$. Hence, $\calL^{\mathrm{extended}}$ will \emph{always} be $\rho$-invariant. We record this in Proposition \ref{prop:prepost} below. 
    
    As for our global assumptions, note that all non-linearities in the resulting structures map between spaces on which $G$ acts trivially, so that \ref{ass:sigma} is satisfied. As for the assumption \ref{ass:nonempty} of $\calH_G$ being non-empty, note the zero map is always equivariant with respect to $\rho^{\mathrm{pre}}$ and $\rho^{\mathrm{triv}}$, and  $\rho^{\mathrm{triv}}$ and $\rho^{\mathrm{post}}$, respectively. It is also clear that the zero map is contained in the non-restricted $\calL^{\mathrm{pre}}$- and $\calL^{\mathrm{post}}$-spaces, so that $\calH_G\cap \calL$ is still non-empty after composing with an MLP.

    Using pre-and post MLP:s like this may be detrimental for performance -- all spatial structure of the data is destroyed, for instance. The construction is still interesting from a theoretical point of view, as it works for any group and representation, even a priori unknown ones.
\end{example}

\begin{proposition}[Composing with MLPs yields invariance] \label{prop:prepost}
    Any architecture can be turned into an invariant one by pre-and post composing it with MLP:s, and (~re~-~)~choosing all intermediate representations to be trivial.
\end{proposition}

Before moving on, let us prove one crucial lemma for what follows: $\rho$-invariance of $\calL$ is equivalent to $\rho$-equivariance of the projection $\Pi_\calL$. Note that this a well-known result from linear algebra. We include a proof for the convenience of the reader.
\begin{lemma}[Orthogonal projection onto invariant subspace is equivariant]\label{lem:eqviproj}
   $\calL$ is $\rho$-invariant  if and only if $\Pi_{\calL}$ is $\rho$-equivariant.
\end{lemma}
\begin{proof}
    Let us begin by proving the \emph{only if} part. Assume that $\rho(g)\calL \subseteq \calL$, $\forall g\in G$. Clearly, if $\exists B\in \mathrm{T}\calL$, $\exists g \in G$, such that $\rho(g)B\notin \mathrm{T}\calL$, then $A=A_{\calL}+B \in \calL$, but $\rho(g)A = \rho(g)A_{\calL} + \rho(g)B=A_{\calL} + \rho(g)B\notin \calL$. By contraposition, we must have $\rho(g)\mathrm{T}\calL\subseteq \mathrm{T}\calL$, $\forall g \in G$, since we have assumed that $\rho(g)\calL \subseteq \calL$, $\forall g\in G$. Now, suppose that $x\in \mathrm{T}\calL$ and $y\in \mathrm{T}\calL^{\perp}$. Then we have
    \begin{align*}
        \sprod{\rho(g)y,x}=\sprod{y,\rho(g)^*x} = \sprod{y,\rho(g)^{-1}x} =\sprod{y,\rho(g^{-1})x}=0,\quad \forall g\in G,
    \end{align*}
    where the second equality is by unitarity, the third equality is by definition of representations, and the final equality follows from $\rho(g)\mathrm{T}\calL\subseteq \mathrm{T}\calL$, $\forall g \in G$. Thus, it holds that $\rho(g)\mathrm{T}\calL^{\perp}\subseteq \mathrm{T}\calL^{\perp}$, $\forall g \in G$. We can decompose any $z\in \calH$ as $z=z_{\mathrm{T}\calL}+z_{\mathrm{T}\calL^{\perp}}$, where $z_{\mathrm{T}\calL}= \Pi_{\calL}z$ and $z_{\mathrm{T}\calL^{\perp}}=\Pi_{\calL}^{\perp}z$. It follows that
    \begin{align*}
        \Pi_{\calL} \rho(g) z = \Pi_{\calL} \rho(g)(z_{\mathrm{T}\calL}+z_{\mathrm{T}\calL^{\perp}})=\Pi_{\calL} \rho(g)z_{\mathrm{T}\calL}+\Pi_{\calL} \rho(g)z_{\mathrm{T}\calL^{\perp}}=\rho(g)z_{\mathrm{T}\calL}
        =\rho(g)\Pi_{\calL}z,
    \end{align*}
    $\forall g\in G$, where the second equality is by linearity, the third equality follows by $\rho(g)\mathrm{T}\calL\subseteq \mathrm{T}\calL$ and $\rho(g)\mathrm{T}\calL^{\perp}\subseteq \mathrm{T}\calL^{\perp}$, $\forall g \in G$, and the last equality is by definition of $z_{\mathrm{T}\calL}$. This concludes the \emph{only if} part of the proof.

    Let us now prove the \emph{if} part of the proof. Assume that $\Pi_\calL$ is $\rho$-equivariant. Then we have for any $B\in \mathrm{T}\calL$ that
    \begin{align*}
        \rho(g)B=\rho(g)\Pi_\calL B = \Pi_\calL \rho(g) B\in\mathrm{T}\calL,\quad \forall g\in G,
    \end{align*}
    since $\Pi_\calL$ is a $\rho$-equivariant projection onto $\mathrm{T}\calL$. Thus, we have for any $A\in \calL$ that
    \begin{align*}
        \rho(g)A =\rho(g)(A_\calL + B) = \rho(g)A_\calL + \rho(g)B = A_\calL + \rho(g)B\in \calL, \quad \forall g \in G,
    \end{align*}
    since $\calL=A_\calL+\mathrm{T}\calL$ with $\rho(g)A_\calL= A_\calL$ and $\rho(g)$ is linear. 
\end{proof}






\section{Equivariance of augmented neural network training}\label{sec:main}

This section will be devoted to proving our main theorem: training a neural network with a $\rho$-invariant architecture space $\calL$ with augmentation will yield equivariant ensembles. We will consider two algorithms: gradient flow with 'full augmentation' (i.e., training using all transformed versions of each data points with infinitesimal learning rate) and stochastic gradient descent with random augmentations.
The strategy is clear: each of the algorithms, we will show that they are equivariant in the sense of Definition \ref{def:eqvalgo} and fulfill Equation \ref{eq:symdata}, and then apply Theorem \ref{thm:equiensemble}.

Before going into the details of each algorithm, let us give the general setting: We consider a fixed neural network architecture $\Psi_A$, with weights $A$ confined to an architecture space $\calL$. Given a set $D$ of training data from $X \times Y$ and a loss function $\ell:Y \times Y \to \R$, we define a \emph{nominal risk function}
\begin{align*}
    R_D(A) = \frac{1}{\vert{D}\vert} \sum_{(x,y) \in D}\ell(\Psi_A(x),y)).
\end{align*}
In order to prove our results, we will need to make one more global assumption. 
\begin{assumption}\label{ass:invariantloss}
    The loss function $\ell$ is $\rho$-invariant, that is $\ell(\rho_N(g)y,\rho_N(g)y')=\ell(y,y').$
\end{assumption}
This assumption can be interpreted as saying that the loss should not be biased towards non-equivariant functions. This assumption was also used in \cite{nordenfors2024optimizationdynamicsequivariantaugmented}, and it was there argued that it is often satisfied. As an example, if $G$ acts trivially on $Y$ (i.e., we are striving for an invariant model), $\ell$ is trivially $\rho$-invariant.

We are going to train the neural network by minimizing modified versions of $R$ over $\calL$ using modified versions of gradient flow/descent algorithms. To do so practically, we use a parametrization of $\calL$  $c \mapsto A_\mathcal{L} + Lc$, where $A_\mathcal{L}$ is a 'basepoint' in $\calL\cap \calH_G$, and $L: \R^p \to \mathrm{T}\mathcal{L}$ an isometric embedding, meaning that $L^*L= \mathrm{id}$, and then optimize the function
\begin{align*}
    \mathrm{R}(c) = R(A_\mathcal{L}+Lc).
\end{align*}
Notice that in the examples we have discussed, such an $L$ is easy to describe -- for example in the convolutional case, $L$ takes a vector of pixel values and inserts them to the pixels of the filter.

Importantly, calculating gradients of $\mathrm{R}(c)$ and then using them to update $A$ through the parametrization is the same as calculating gradients of $R$ and then projecting them to $\mathrm{T}\calL$, since
\begin{align*}
    L\nabla\mathrm{R}(c)= LL^*\nabla R(A_\mathcal{L} + Lc) = \Pi_{\mathcal{L}}\nabla R(A),
\end{align*}
where $LL^*=\Pi_\calL$ follows from the fact that $L$ is an isometric embedding.

To satisfy the conditions of Lemma \ref{lem:equiupdate}, we need to have $\rho$-invariantly initialized weights. Note that a Gaussian initialization of the coefficients $c$ will, as soon as $\calL$ is $\rho$-invariant, yield a $\rho$-invariant distributions of the parameters $A$:

\begin{lemma}[Invariant distribution of coefficients implies invariant distribution of weights]\label{lem:gaussinit}
    Let $L: \R^p \to \calL$ be a parametrization of $\calL$, $\calL = A_\calL + L\R^p$, with $A_\calL \in \calH_G$. Then, if $c$ is a standard Gaussian vector (with i.i.d. entries),  and $\calL$ is $\rho$-invariant, $A = A_\calL + Lc$ will have a $\rho$-invariant distribution.
\end{lemma}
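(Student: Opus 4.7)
The plan is straightforward: since $A_\calL \in \calH_G$ is fixed by every $\rho(g)$, the problem reduces to showing that the random vector $Lc \in \mathrm{T}\calL$ has a $G$-invariant distribution, and then noting that $\rho(g)A = A_\calL + \rho(g)Lc$.

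First I would observe that, since $L : \R^p \to \mathrm{T}\calL$ is unitary in the sense that $L^*L = \id_{\R^p}$, the push-forward $Lc$ of a standard Gaussian under $L$ is the centered Gaussian on $\calH$ supported on $\mathrm{T}\calL$ with covariance operator $LL^* = \Pi_\calL$. A Gaussian on a finite-dimensional inner-product space is uniquely determined by its mean and covariance, so it suffices to show that $\rho(g)Lc$ has the same mean (obviously zero) and the same covariance. Computing the covariance,
\begin{align*}
\mathbb{E}[(\rho(g)Lc)(\rho(g)Lc)^*] = \rho(g)\,LL^*\,\rho(g)^* = \rho(g)\,\Pi_\calL\,\rho(g)^{-1},
\end{align*}
where I used that $\rho(g)$ is unitary, so $\rho(g)^* = \rho(g)^{-1}$.

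The crucial step is to invoke Lemma \ref{lem:eqviproj}: $G$-invariance of $\calL$ gives $\rho(g)\Pi_\calL = \Pi_\calL\rho(g)$, whence $\rho(g)\Pi_\calL\rho(g)^{-1} = \Pi_\calL$. Thus $\rho(g)Lc$ and $Lc$ are both centered Gaussians with covariance $\Pi_\calL$, and so are equal in distribution.

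Finally, combining everything,
\begin{align*}
\rho(g)A = \rho(g)A_\calL + \rho(g)Lc = A_\calL + \rho(g)Lc \sim A_\calL + Lc = A,
\end{align*}
for all $g \in G$, which is the desired $G$-invariance. I do not expect any serious obstacle: the only subtlety is making sure $\Pi_\calL$ (rather than the identity) plays the role of the covariance, since $Lc$ is degenerate as a Gaussian on the ambient $\calH$ — but once phrased in terms of covariance operators on $\calH$, the unitarity of $\rho(g)$ together with Lemma \ref{lem:eqviproj} closes the argument cleanly.
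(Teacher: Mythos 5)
Your proof is correct and follows essentially the same route as the paper's: both arguments reduce the claim to comparing the (Gaussian) mean and covariance of $A$ and $\rho(g)A$, identify the covariance as $LL^*=\Pi_\calL$ via the unitarity of $L$, and then use the commutation $\rho(g)\Pi_\calL=\Pi_\calL\rho(g)$ from Lemma~\ref{lem:eqviproj} together with the unitarity of $\rho(g)$ to conclude. Your explicit remark that the Gaussian is degenerate on $\calH$ and must be handled via its covariance operator $\Pi_\calL$ is a welcome clarification, but the substance is identical.
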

\begin{proof}
    Because $c$ is standard Gaussian, $A$ will also have a Gaussian distribution, with mean $A_\calL$ and covariance matrix $LL^*=\Pi_\calL$, where the latter follows from the fact that $L$ is unitary. Now, by the same argument, $\rho(g)A= \rho(g)A_\calL + \rho(g)Lc$ will be Gaussian with mean $\rho(g)A_\calL=A_\calL$ -- remember that $A_\calL\in \calH_G$ -- and covariance $\rho(g)L(\rho(g)L)^*$.  We may now argue
    \begin{align*}
        \rho(g)L(\rho(g)L)^* = \rho(g)LL^*\rho(g)^* = \rho(g)\Pi_\calL \rho(g)^* = \Pi_\calL\rho(g)\rho(g)^* = \Pi_\calL.
    \end{align*}
    The penultimate step follows from the $\rho$-invariance of $\calL$, and the final one from the unitarity of $\rho(g)$. This shows that $A$ and $\rho(g)A$ both are Gaussians with mean $A_\calL$ and covariance $\Pi_\calL$, and we are done.
    \end{proof}

    \begin{remark}
        Note that there is of course no necessity to use a Gaussian initialization in general. For instance, if the intermediate representations are all trivial (which we know we can always use in the MLP setting, see Example \ref{ex:trivarch}), $A\disteq \rho(g)A$ is actually a vacuous condition for all but the first layer. Note that Li, Zhang and Arora in \cite{li2021convolutional} remark that they only need to assume that the first layer is invariantly initialized to prove their results -- since they are using MLPs, this is in perfect resonance with our argument here.
    \end{remark}

\subsection{Gradient flow}
Let us first consider the strategy of applying gradient flow to a fully augmented risk. Gradient flow hereby refers to letting the parameters $A$ flow according to the ODE
\begin{align*}
    \dot{A} = -\Pi_\calL\nabla R_D(A)
\end{align*}
for some fixed time $t$, from some initialization $A_0$. Equivalently, we can phrase this as applying the \emph{flow map} $\calF^t_D$ of $R_D$ once. That is, apply the update map
\begin{align*}
    U^{\mathrm{GF}}(A^0, \Phi_{\cdot},D) = \calF_D^t(A^0)
\end{align*}
once to a randomly initialized weight $A^0$.

By Lemma \ref{lem:equiupdate}, this yields an equivariant training algorithm if the initialization is $\rho$-invariant and the update is  $U^{\mathrm{GF}}$ is equivariant.

\begin{lemma}[Equivariance of $U^{GF}$] \label{lem:gradientflow}
    In addition to Assumptions \ref{ass:sigma}, \ref{ass:compact}, \ref{ass:nonempty}, and \ref{ass:invariantloss}, assume that  $\calL$ is $\rho$-invariant. Assume further that $R_D$ continuously differentiable in $A$. Then, $U^{\mathrm{GF}}$ is equivariant.
\end{lemma}
\begin{proof}
The smoothness assumption shows that the flow map is well-defined (by Picard-Lindelöf).

    Let us first show that under our assumption, we have $R_{gD}(\rho(g)A) = R_{D}(A)$.
    \begin{align*}
        R_{gD}(\rho(g)A) &=  \frac{1}{\vert{D}\vert} \sum_{(x,y) \in D}\ell(\Psi_{\rho(g)A}(\rho_0(g)x),\rho_N(g)y)) \\
        &\stackrel{\text{Lem.} \ref{lem:jointequi}}{=} \frac{1}{\vert{D}\vert} \sum_{(x,y) \in D}\ell(\rho_N(g)\Psi_A(x),\rho_N(g)y)) \\
        &\stackrel{\ell \text{ inv.}}= \frac{1}{\vert{D}\vert} \sum_{(x,y) \in D}\ell(\Psi_A(x),y) = R_D(A).
    \end{align*}
    Differentiating the equality $R_{gD}(\rho(g)A) = R_{D}(A)$ yields $\nabla R_D(A) = \rho(g)^*\nabla R_{gD}(\rho(g)A)$, which together with the unitarity of $\rho$ yields
    \begin{align} \label{eq:gradient_transformation}
        \nabla R_{gD} (\rho(g)A) = \rho(g)\nabla_D R(A).
    \end{align}

    Equipped with \eqref{eq:gradient_transformation}, the equivariance of $U^{\mathrm{GF}}$ now follows by some standard ODE theory arguments: By definition, what is to be proven is $\calF_{gD}(\rho(g)A^0) = \rho(g)\calF_{D}(A^0)$. That is:
    \begin{align*}
        \text{ If $\gamma$ solves} \begin{cases} \dot{\gamma}(s) &= - \Pi_\calL \nabla_DR (\gamma(s)) \\ \gamma(0) &= A^0\end{cases}, \text{ $\gamma^g :=\rho(g)\gamma$ solves } \begin{cases} \dot{\gamma^g}(s) &= - \Pi_\calL \nabla_DR (\gamma^g(s)) \\ \gamma^g(0) &= \rho(g)A^0\end{cases}
    \end{align*}
    It is clear that $\gamma^g(0) = \rho(g)\gamma(0) = \rho(g)A^0$. As for the differential equation, we have
    \begin{align*}
        \dot{\gamma^g}(s)&\stackrel{\rho(g) \text{ linear}}{=}\rho(g)\dot{\gamma}(s) \stackrel{\gamma \text{ solves ODE}}{=}-\rho(g)\Pi_\calL \nabla_D R(\gamma(s)) \stackrel{\text{Lem. } \ref{lem:eqviproj}}{=}-\Pi_\calL \rho(g) \nabla_DR(\gamma(s)) \\
        &\stackrel{\text{\eqref{eq:gradient_transformation}}}{=} -\Pi_\calL \nabla R_{gD}(\rho(g)\gamma(s)) = -\Pi_\calL \nabla R_{gD}(\gamma^g(s)).
    \end{align*}
\end{proof}

One way to view this result is that it is essentially a consequence of the fact that an equivariant vector field yields an equivariant flow map, which is proved in \cite{kohler2020equivariant}.

\begin{theorem}
    Ensembles of i.i.d invariant initialized neural networks with a $\rho$-invariant architecture space trained with gradient flow on fully augmentated datasets are equivariant. 
\end{theorem}
\begin{proof}
    Combine Lemmata \ref{lem:equiupdate}, \ref{lem:gradientflow}, Theorem \ref{thm:equiensemble} and \ref{rem:symmetric_data}.
 \end{proof}

\begin{remark}
    As we have previously noted, full group augmentation is practically impossible when dealing with infinite groups. Theorem \ref{thm:main} is however still relevant for it. That is, we could approximate the full augmentation by augmenting with a finite subset $H$ of $G$. Then, the risk we should optimize is $
        R^{H}(A)=\frac{1}{\abso{H}}\sum_{h\in H}R(\rho(h)A).$
If $H$ is a subgroup of $G$, this directly corresponds to performing full augmentation with the smaller set $H$. For instance, one could approximate $\mathrm{SO}(3)$ by the subgroup of rotational symmetries of an icosahedron. Our results then immediately imply that the ensemble models will be equivariant with respect to that subgroup.
\end{remark}

\subsection{Stochastic gradient descent}
We now move on to \emph{mini-batch stochastic gradient descent (SGD) with random augmentation}. This is again an iterative algorithm, with steps defined as follows: At each step, we uniformly draw a batch $B_t \sse D$ of $s$ data-examples from the training set $D$ together with $s$ i.i.d Haar-distributed group elements $g_k$. We apply $(\rho_0\oplus \rho_N)(g_k)$ to $(x_k,y_k)$, for every $k\in \{1,\ldots,s\}$, then calculate the average gradient of the estimated risk over the batch. Let us denote
$$
\widehat{\raug_D} (A) \defeq \frac{1}{s}\sum_{k=1}^s\ell (\Phi_{A}(\rho_0(g_k)x_k),\rho_N(g_k)y_k)
$$
then, given some sequence of step-sizes $\{\gamma_t\}_{t=1}^{T}$, we can define the update rule
\begin{align}
    A^{t+1} = U^{SGD}(A^t,\Phi_\cdot, D) \defeq A^t - \gamma_{t+1}\Pi_\calL\nabla \widehat{\raug_D}(A^t).
\end{align}
We will assume that the draw of data points and group elements for each batch is independent of each other for every time step and of the weights at every previous time step.
\begin{remark}
    Independence of the draw of batches and the weights is a faithful model for SGD where the batches are drawn with replacement, which also includes the deterministic GD setting. The assumption of independence of the batches $B_t$ and weights $\{A^k\}_{k=0}^{t-1}$ is necessary for our proofs, but is an idealization of the way SGD really works. In practice, the batches in mini-batch SGD are \emph{dependent} since they are sampled without replacement.
\end{remark}
Note that the projected descent emerges in the same way as above by updating the parametrization coefficients $c^t$. Each realization of the process $A^t$ corresponds to one run of the SGD.
We want to show that mini-batch SGD with random augmentation yields an equivariant ensemble, when applied to invariantly initialized weights. Let us begin by proving the following very intuitive lemma about random functions: If two random functions, that almost surely are differentiable, have the property that their pointwise distributions are equal, the same is true for their gradients. For the proof, we will invoke the Cramér-Wold Theorem \cite{Cramr1936SomeTO}.
\begin{theorem}[Cramér-Wold]\label{thm:cramerwold}
    A probability distribution $p$ on $\R^d$ is uniquely determined by the set of distributions $p\circ\pi_v ^{-1}$ on $\R$, indexed by the unit vector $v$, where $\pi_v:\R^d\to \R$, $x\mapsto \sprod{x,v}$.
\end{theorem}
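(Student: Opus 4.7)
The plan is to prove this via characteristic functions. Specifically, I will invoke the classical uniqueness theorem for characteristic functions on $\R^d$, which says that a probability distribution $p$ on $\R^d$ is uniquely determined by its characteristic function $\phi_p(t) = \int_{\R^d} e^{i\langle t, x\rangle} \, dp(x)$. Given this, the task reduces to showing that the family of one-dimensional marginals $\{p \circ \pi_v^{-1}\}_{v \in S^{d-1}}$ determines $\phi_p$.

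The key computation is the following decomposition. For any $t \in \R^d \setminus \{0\}$, write $t = |t| v$ where $v = t/|t|$ is a unit vector. Then $\langle t, x \rangle = |t| \pi_v(x)$, and by the change-of-variables formula for pushforward measures,
\begin{align*}
\phi_p(t) = \int_{\R^d} e^{i|t|\pi_v(x)} \, dp(x) = \int_{\R} e^{i|t| s} \, d(p \circ \pi_v^{-1})(s) = \phi_{p \circ \pi_v^{-1}}(|t|).
\end{align*}
The case $t = 0$ is immediate since $\phi_p(0) = 1$ for any probability measure. Hence, knowing the one-dimensional distributions $p \circ \pi_v^{-1}$ for every unit vector $v$ gives the value of their characteristic functions at every real argument, and in particular recovers $\phi_p(t)$ for every $t \in \R^d$.

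To conclude: if $p$ and $q$ are two probability distributions on $\R^d$ with $p \circ \pi_v^{-1} = q \circ \pi_v^{-1}$ for all unit vectors $v$, then the above identity (applied to each of $p$ and $q$) yields $\phi_p(t) = \phi_q(t)$ for all $t \in \R^d$, and the uniqueness theorem for characteristic functions on $\R^d$ forces $p = q$.

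No step is genuinely difficult here; the only subtlety is that we rely on the multivariate uniqueness theorem for characteristic functions as a black box. That result in turn is typically proved via Fourier inversion or by approximating bounded continuous functions with trigonometric polynomials, but invoking it is standard and needs no further justification in this context.
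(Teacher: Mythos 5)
Your proof is correct: the reduction $\langle t,x\rangle = |t|\,\pi_{t/|t|}(x)$ shows that the one-dimensional projections determine the characteristic function $\phi_p$ at every $t\in\R^d$, and the multivariate uniqueness theorem for characteristic functions then gives $p=q$. This is the standard textbook argument for Cram\'er--Wold. Note that the paper itself does not prove this statement at all --- it is stated as a classical result and cited to Cram\'er and Wold's original work --- so there is no in-paper proof to compare against; your argument fills that gap in the standard way.
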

We now formulate and prove the lemma.
\begin{lemma}[Equidistributed functions have equidistributed gradients]\label{cor:cramergradients}
    Let $\varphi$ and $\psi$ be random functions $\R^d\to \R$ that almost surely are $C^1$. If for every $x\in\R^d$, $\varphi(x)\disteq \psi(x)$, then  also $\nabla \varphi(x) \disteq \nabla \psi(x)$, for all $x\in\R^d$.
\end{lemma}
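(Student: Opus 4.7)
The plan is to invoke the Cramér-Wold theorem (Theorem \ref{thm:cramerwold}) to reduce the multivariate claim to a family of scalar ones: it suffices to show that for every $x \in \R^d$ and every unit vector $v \in \R^d$, the directional derivatives $\langle \nabla f(x), v\rangle$ and $\langle \nabla g(x), v\rangle$ agree in distribution.

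For this scalar comparison, I would represent each directional derivative as an almost-sure limit of difference quotients. Since $f$ is almost surely $C^1$, one has
\begin{equation*}
\langle \nabla f(x), v\rangle \;=\; \lim_{h \to 0} \frac{f(x+hv) - f(x)}{h} \quad \text{a.s.,}
\end{equation*}
and analogously for $g$. Almost-sure convergence implies convergence in distribution, so by uniqueness of weak limits the scalar claim reduces to showing that the difference quotients $(f(x+hv)-f(x))/h$ and $(g(x+hv)-g(x))/h$ coincide in distribution for every $h \neq 0$.

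The subtle step — which I expect to be the main obstacle — is precisely this difference-quotient step. Since the quotient is a continuous function of the pair $\bigl(f(x), f(x+hv)\bigr)$, matching its distribution with the $g$-counterpart hinges on equality of the \emph{joint} distributions of $(f(x), f(x+hv))$ and $(g(x), g(x+hv))$, not merely on equality of one-dimensional marginals. The lemma's hypothesis "$f(x) \sim g(x)$ for every $x$" must therefore be read in the process-level sense of equal finite-dimensional distributions, which is indeed the form in which the hypothesis will be verified in the application to $\nabla R^g$. Granting this reading, the difference quotients match in distribution for every $h$, hence so do their $h\to 0$ limits, and Cramér-Wold assembles the resulting scalar equalities into the vector-valued claim $\nabla f(x) \sim \nabla g(x)$.
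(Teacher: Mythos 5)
Your proposal follows the same route as the paper's own proof: express each directional derivative as the almost-sure limit of difference quotients, pass equality in distribution through the limit, and assemble the scalar statements with Cramér--Wold. The one place you diverge is that you stop to worry about whether the difference quotients actually match in distribution, and you are right to: this is a genuine gap in the lemma as literally stated, which the paper's proof steps over silently. Pointwise equality of the one-dimensional marginals does not control the joint law of $\bigl(f(x),f(x+hv)\bigr)$, and the conclusion can fail without it. For instance, with $d=1$, take $f(x)=N$ and $g(x)=N\cos x+M\sin x$ with $N,M$ i.i.d.\ standard Gaussians: then $f(x)\sim g(x)\sim\mathcal{N}(0,1)$ for every $x$ and both functions are almost surely $C^\infty$, yet $f'(x)=0$ while $g'(x)\sim\mathcal{N}(0,1)$. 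So your proposed repair --- reading the hypothesis as equality of finite-dimensional distributions (equality of the two-point joint laws already suffices for the difference-quotient step) --- is not optional but necessary. It is also available in the only place the lemma is used: in Lemma \ref{lem:equidistr} one compares $f$ with $f\circ\rho(g)$, and in the concrete instance $f=R^g$ the Haar-invariance change of variables $(g_1,\dots,g_s)\mapsto(h^{-1}g_1,\dots,h^{-1}g_s)$ that establishes $R^g(A)\sim R^g(\rho(h)A)$ is a measure-preserving transformation of the underlying randomness that does not depend on $A$, so it identifies the two random functions in distribution as processes and supplies exactly the stronger hypothesis you need.
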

\begin{proof}
    Assume that $\varphi(x)\disteq \psi(x)$, for all  $x\in\R^d$. Then
    \begin{align*}
        X_n \defeq \frac{\varphi(x+\frac{1}{n}v)-\varphi(x)}{\frac{1}{n}} \disteq \frac{\psi(x+\frac{1}{n}v)-\psi(x)}{\frac{1}{n}}\eqdef Y_n, \quad \forall x,v\in \R^d, \, n\in\Z_+.
    \end{align*}
    Now, for every $x\in\R^d$ and every $v\in S^{d-1}$, $X_n\to \sprod{\nabla \varphi(x),v}$ and $Y_n\to\sprod{\nabla \psi(x),v}$ a.s. as $n\to\infty$. Thus, the distributions of the limit random variables (i.e., the distributions of the directional derivative in direction $v$) also agree. So that
    \begin{align*}
        \sprod{\nabla \varphi(x),v}\disteq\sprod{\nabla \psi(x),v},\quad \forall x\in \R^d, \, \forall v\in S^{d-1}.
    \end{align*}
    Since this holds for every $x\in\R^d$ and every $v\in S^{d-1}$, we can apply the Cramér-Wold Theorem to conclude that
    \begin{align*}
        \nabla \varphi(x) \disteq \nabla \psi(x),\quad \forall x\in\R^d.
    \end{align*}
\end{proof}

From the above it follows that an invariant function has an equivariant gradient.

\begin{lemma}[Invariant function has equivariant gradient]\label{lem:equidistr}
    If $f$ is a random function $\calH \to \R$ that almost surely is $C^1$ and satisfies $f(A) \disteq (f\circ \rho(g))(A)$, for every $g \in G$ and every $ A\in \calH$, then $\nabla f$ satisfies $\nabla f(\rho(g)A) \disteq \rho(g)\nabla f(A)$, for every $g \in G$ and every $A\in \calH$.
\end{lemma}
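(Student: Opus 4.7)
The plan is to reduce this statement to the previous lemma (Lemma \ref{cor:cramergradients}) by a carefully chosen change of variable, then exploit the chain rule together with unitarity of $\rho(g)$.

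First I would fix $g\in G$ arbitrary and apply Lemma \ref{cor:cramergradients} to the two random functions $f$ and $h \defeq f\circ\rho(g)$. The hypothesis $f(A)\sim (f\circ\rho(g))(A)$ for every $A\in\calH$ is exactly the pointwise equality in distribution required to invoke that lemma, so the conclusion gives $\nabla f(A) \sim \nabla h(A)$ for every $A\in\calH$. Note that $h$ is almost surely $C^1$ because $\rho(g)$ is linear (in fact smooth) and $f$ is almost surely $C^1$ by assumption.

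Next I would compute $\nabla h(A)$ via the chain rule. Since $\rho(g)$ is a linear map with adjoint $\rho(g)^*$, one has $\nabla h(A) = \rho(g)^*\nabla f(\rho(g)A)$. Combining with the previous step, this yields
\begin{align*}
   \nabla f(A) \sim \rho(g)^*\nabla f(\rho(g)A), \qquad \forall A\in\calH.
\end{align*}
Finally, applying the (deterministic, measurable) map $\rho(g)$ to both sides preserves equality in distribution, and using unitarity $\rho(g)\rho(g)^* = \id$ gives
\begin{align*}
   \rho(g)\nabla f(A) \sim \rho(g)\rho(g)^*\nabla f(\rho(g)A) = \nabla f(\rho(g)A),
\end{align*}
which is the desired identity. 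Since $g\in G$ and $A\in\calH$ were arbitrary, the proof is complete.

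I do not expect any serious obstacle here; the only subtle point is making sure the chain-rule identity holds pointwise almost surely (which it does because $\rho(g)$ is linear and $f$ is a.s.\ $C^1$), and keeping track of the adjoint so that the unitarity of $\rho(g)$ can be used to cancel $\rho(g)\rho(g)^*$.
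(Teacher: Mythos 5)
Your proof is correct and uses the same ingredients as the paper's: Lemma \ref{cor:cramergradients} applied to the pair $f$ and $f\circ\rho(g)$, the chain rule $\nabla(f\circ\rho(g))(A)=\rho(g)^*\nabla f(\rho(g)A)$, and unitarity of $\rho(g)$. Your arrangement is in fact slightly more streamlined than the paper's (which evaluates at $\rho(h)A$ and shuffles an auxiliary group element through before cancelling), but it is essentially the same argument.
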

\begin{proof}
    We have that
    \begin{align*}
        \nabla f (A) \disteq \nabla (f\circ \rho(g)) (A) 
        = \rho(g)^* \nabla f (\rho(g)A), \quad \forall g \in G,\quad \forall A\in \calH,
    \end{align*}
    where the first equality in distribution follows from Lemma \ref{cor:cramergradients} with $\varphi=f$ and $\psi=f\circ \rho(g)$, and the second  is simply the chain rule. The result follows by applying $\rho(g)$ to both sides of the equality.
\end{proof}
To apply the above result to our risk, we need to demonstrate its invariance. Let us do this next.
\begin{lemma}[Invariance of $\widehat{\raug_D}$]\label{lem:invrisk}
    The risk $\widehat{\raug_D}$ satisfies
    \begin{enumerate}[(i)]
        \item $\widehat{\raug_{gD}}(A) \disteq \widehat{\raug_D}(A)$,
        \item $\widehat{\raug_D}(\rho(g)A)\disteq \widehat{\raug_D}(A)$,
    \end{enumerate}
    for every $g\in G$, $A\in \calH$, and $D$.
\end{lemma}
\begin{proof}
    We will begin by proving (i). We have for $h\in G$ arbitrary, due to $\rho_0$ and $\rho_N$ being distributions,
    \begin{align*}
        \widehat{\raug_{hD}}(A)&= \frac{1}{s}\sum_{k=1}^s \ell(\Phi_{A}(\rho_0(g_k)\rho_0(h)x_k),\rho_N(g_k)\rho_N(h)y_k) \\&=\frac{1}{s}\sum_{k=1}^s \ell(\Phi_{A}(\rho_0(g_kh)x_k),\rho_N(g_kh)y_k).
    \end{align*}
   Now, by the invariance property of the Haar measure, the tuple of group elements $g_1, \dots g_{s}$ are equidistributed with the tuple $g_1h, \dots g_sh$. Since the data points $(x_k,y_k)$ are independent of the draw of the $g_k$, we can conclude that  the above is equidistributed with 
    \begin{align*}
 \frac{1}{s}\sum_{k=1}^s \ell(\Phi_{A}(\rho_0(g_k)x_k),\rho_N(g_k)y_k)= \widehat{\raug_D}(A),
    \end{align*}
  which is what we wanted to show.
    Next, we will prove (ii). Applying Lemma \ref{lem:jointequi}, we get that for every $A\in \calH$ that
    \begin{align*}
        \widehat{\raug_D}(\rho(h)A)&= \frac{1}{s}\sum_{k=1}^s \ell(\Phi_{\rho(h)A}(\rho_0(g_k)x_k),\rho_N(g_k)y_k) \\
        &= \frac{1}{s}\sum_{k=1}^s \ell(\rho_N(h)\Phi_{A}(\rho_0(h)^{-1}\rho_0(g_k)x_k),\rho_N(g_k)y_k).
    \end{align*}
    Invoking the $\rho$-invariance of $\ell$ and then that $\rho$ is a representation, the above can be rewritten to
    \begin{align*}
        & \frac{1}{s}\sum_{k=1}^s \ell(\Phi_{A}(\rho_0(h^{-1}g_k)x_k),\rho_N(h^{-1}g_k)y_k)
    \end{align*}
    Now, by the invariance property of the Haar measure, the tuple of group elements $g_1, \dots g_{s}$ are equidistributed with the tuple $h^{-1}g_1, \dots h^{-1}g_{s}$. Since the data points $(x_k,y_k)$ are independent of the draw of the $g_k$, we can conclude that  the above is equidistributed with 
    \begin{align*}
 \frac{1}{s}\sum_{k=1}^s \ell(\Phi_{A}(\rho_0(g_k)x_k),\rho_N(g_k)y_k)= \widehat{\raug_D}(A),
    \end{align*}
  which is what we wanted to show.
\end{proof}
We now show that the gradient of $\widehat{\raug_D}$, in a distributional sense, is equivariant. 
\begin{lemma}[Equivariance of $\nabla \widehat{\raug_D}$]\label{lem:equidistrgrad}
    The gradient of $\widehat{\raug_D}$ satisfies
    \begin{enumerate}[(i)]
        \item $\nabla \widehat{\raug_{gD}}(A) \disteq \nabla \widehat{\raug_{D}}(A)$
        \item $\nabla \widehat{\raug_D}(\rho(g)A) \disteq \rho(g)\nabla \widehat{\raug_D}(A)$,
    \end{enumerate}for every $g \in G$, $A\in \calH$, and $D$. 
\end{lemma}
\begin{proof}
    (i) follows by Lemma \ref{cor:cramergradients} and (ii) by Lemmata \ref{lem:equidistr} and \ref{lem:invrisk}.
\end{proof}

\begin{lemma}[Equivariance and data symmetry of $U^{\mathrm{SGD}}$]\label{lem:eqvsgd}
    Let $\calL$ be $\rho$-invariant, then the update rule $U^{\mathrm{SGD}}$ satisfies
    \begin{enumerate}[(i)]
        \item $U^{\mathrm{SGD}}(A,\Phi_\cdot, gD)\disteq U^{\mathrm{SGD}}(A,\Phi_\cdot, D)$,
        \item $\rho(g)U^{\mathrm{SGD}}(A,\Phi_\cdot, D)\disteq U^{\mathrm{SGD}}(\rho(g)A,\Phi_\cdot, gD)$,
    \end{enumerate}
    for every $g \in G$, and $D$.
\end{lemma}
\begin{proof}
    Let us first prove (i). By Lemma \ref{lem:equidistrgrad}(i)., we have that
    \begin{align*}
        U^{\mathrm{SGD}}(A,\Phi_\cdot, gD)\disteq A - \gamma\Pi_\calL\nabla \widehat{\raug_{gD}}(A) \disteq A - \gamma\Pi_\calL\nabla \widehat{\raug_{D}}(A)\disteq U^{\mathrm{SGD}}(A,\Phi_\cdot, D),
    \end{align*}
    which proves (i). For the second part, we will again use Lemma \ref{lem:equidistrgrad}, which yields
    \begin{align*}
        \rho(g)U^{\mathrm{SGD}}(A,\Phi_\cdot, D) &\disteq \rho(g)\big( A - \gamma\Pi_\calL\nabla \widehat{\raug_{D}}(A)\big) \disteq \rho(g)A - \rho(g)\gamma\Pi_\calL\nabla \widehat{\raug_{D}}(A)\\&\disteq \rho(g)A - \gamma\Pi_\calL\rho(g)\nabla \widehat{\raug_{D}}(A)\disteq \rho(g)A - \gamma\Pi_\calL\nabla \widehat{\raug_{D}}(\rho(g)A)\\&\disteq \rho(g)A - \gamma\Pi_\calL\nabla \widehat{\raug_{gD}}(\rho(g)A) \disteq U^{\mathrm{SGD}}(\rho(g)A,\Phi_\cdot, gD),
    \end{align*}
    where the third equality follows by $\rho$-invariance of $\calL$  and Lemma \ref{lem:eqviproj}, the fourth equality follows from Lemma \ref{lem:equidistrgrad}(ii), and the fifth equality follows from Lemma \ref{lem:equidistrgrad}(i). 
\end{proof}
With this, we are now equipped to prove the main result of the paper.
\begin{theorem}[Main theorem: Equivariance of ensembles trained by mini-batch SGD with random augmentations]\label{thm:main}
    Ensembles of i.i.d invariant initialized neural networks with a $\rho$-invariant architecture space trained with mini-batch SGD with random augmentations are equivariant.
\end{theorem}
\begin{proof}
    By Lemmata \ref{lem:eqvsgd}, \ref{lem:equiupdate}, and \ref{lem:datasymupd}, mini-batch SGD with random augmentations satisfies the assumptions of Theorem \ref{thm:equiensemble}, by which it follows that ensembles trained by mini-batch SGD with random augmentations are equivariant.
\end{proof}

\subsection{Finite ensembles}\label{sec:finite}


   All of the results up until now are concerned with  the properties of $\overline{\Phi}_t$, i.e. the true mean of the ensemble over the random draw of the initialization and the randomness of the training algorithm. In practice, one of course needs to resort to approximate this mean via sampling: That is, draw $M$ independent initializations $A^0_k$, train each model separately to weights $A_k^t$, and then evaluate the sample mean
    \begin{align}
        \widehat{\Phi}^t_M(x) = \tfrac{1}{M}\sum_{k=1}^M \Phi_{A_k^t}(x). \label{eq:sample_ensemble}
    \end{align}
   Appealing to the law of large numbers, this sample mean of course get close to $\overline{\Phi}_t(x)$ as $M$ grows large, and the speed will essentially depend on the variance of the random variable $\Phi_{A_k^t}(x)$. To estimate the latter is in general hard -- in the NTK-limit it is easy, since a formula for the variance can be obtained. Gerken and Kessel used the latter to derive a simple sample bound in \cite{gerken2024emergentequivariancedeepensembles}. To obtain general bounds for finite-width neural networks is hard and well beyond the scope of this paper.

   Let us here only formalize the intuitive picture we gave above: Any bound on the variance of $\Phi_{A_k^t}$ (over different independent training runs) will give us a sample bound for achieving approximate equivariance. 
   \begin{proposition}[Approximate equivariance of finite ensembles]\label{prop:concentration}
       Let $Y=\mathbb{R}^d$ be equipped with the standard Euclidean norm. Assume that the distribution $\Phi_{A^t}(x)$ of the neural network outputs for an element $x\in X$ is
       \begin{enumerate}[(i)]
           \item almost surely bounded, $\norm{\Phi_{A^t}(x)}_2\leq \xi_{t}$
           \item has a bounded variance $\mathbb{E}(\norm{\Phi_{A^t}(x)-\overline{\Phi}^t(x)}_2^2) =V_{t}<\infty$.
       \end{enumerate}
        \underline{Case 1:} The orbit $O_x = \{\rho_0(g)x \, \vert \, g \in G\}$ is finite.      
       Then, for every $\varepsilon, \delta >0$ if 
       \begin{align*}
           M \geq C \cdot\max \bigg(\frac{V_{t}}{\varepsilon^2}, \frac{\xi_t}{\varepsilon}\bigg)\log\bigg(\frac{(d+1)\cdot\vert{O_x}\vert}{\delta}\bigg)
       \end{align*}
       the sampled ensemble $\widehat{\Phi}^t_N(x)$ will obey
       \begin{align*}
           \sup_{g\in G} \norm{\rho_N(g)\widehat{\Phi}^t_M(x) - \widehat{\Phi}^t_M(\rho_0(g)x)}_2 \leq \varepsilon
       \end{align*}
        with a probability higher than $1-\delta$. Here, $C>0$ is a constant independent of $ \Phi, x, G$.

\noindent
        \underline{Case 2:} The orbit $O_x$ is infinite. Then, under the additional assumption that  
        \begin{enumerate}
            \item[(iii)] $\Phi_{A^t}$ almost surely has a Lipschitz-constant smaller than $K$,
        \end{enumerate}
        we have for every $\varepsilon, \delta >0$ that if 
        \begin{align*}
           M \geq C \cdot\max \bigg(\frac{V_{t}}{\varepsilon^2}, \frac{\xi_t}{\varepsilon}\bigg)\log\bigg(\frac{(d+1)\cdot \calN(O_x, \tfrac{\varepsilon}{3\max(K,1)})}{\delta}\bigg)
       \end{align*}
       the sampled ensemble $\widehat{\Phi}^t_M(x)$ will obey
       \begin{align*}
           \sup_{g\in G} \norm{\rho_N(g)\widehat{\Phi}^t_M(x) - \widehat{\Phi}^t_M(\rho_0(g)x)}_2 \leq \varepsilon
       \end{align*}
        with a probability higher than $1-\delta$. Here, $C>0$ is a constant independent of $ \Phi, x, G$, and $\calN(O_x,\cdot)$ denotes the covering numbers for $O_x$. 
   \end{proposition}
       
    \begin{proof}
        Note that by the triangle inequality and Theorem \ref{thm:main} we have
        \begin{align}
            &\norm{\rho_N(g)\widehat{\Phi}^t_M(x) - \widehat{\Phi}^t_M(\rho_0(g)x)}_2 \nonumber\\
            &\qquad \qquad \leq \norm{\rho_N(g)\widehat{\Phi}^t_M(x) - \rho_N(g)\overline{\Phi}^t(x)}_2 + \norm{\widehat{\Phi}^t_M(\rho_0(g)x) - \rho_N(g)\overline{\Phi}^t(x)}_2\nonumber \\
            &\qquad \qquad =\norm{\rho_N(g)\widehat{\Phi}^t_M(x) - \rho_N(g)\overline{\Phi}^t(x)}_2 + \norm{\widehat{\Phi}^t_M(\rho_0(g)x) -\overline{\Phi}^t(\rho_0(g)x)}_2.
        \end{align}
        Hence, if 
        \begin{align} \label{eq:individual_point}
            \norm{\widehat{\Phi}^t_M(y) -\overline{\Phi}^t(y)}_2\leq r
        \end{align}
        for $r= \varepsilon/2$ and every $y\in O_x$, we obtain the stated equivariance bound. For a fixed $y$, bounding the probability of failure of \eqref{eq:individual_point} is simple: applying the Matrix Bernstein inequality \cite{tropp2015introductionmatrixconcentrationinequalities} to $(\Phi_{A^t}(x)-\overline{\Phi}^t(x))/M$, we get that for every $y\in O_x$
        \begin{align}\label{eq:bernstein}
            \mathbb{P}\Big(\norm{\widehat{\Phi}^t_M(y)-\overline{\Phi}^t(y)}_2\geq r\Big)\leq (d+1)\cdot \exp{\Bigg(\frac{-r^2/2}{\frac{V_{t}}{M}+\frac{2r\cdot \xi_t}{3 M}}\Bigg)}.
        \end{align}
        In case 1, we can now take a union bound over $O_x$ for the Inequality \ref{eq:bernstein}. We get the following bound:
        \begin{align*}
            \mathbb{P}\Big(\exists y\in O_x,\,\norm{\widehat{\Phi}^t_M(y)-\overline{\Phi}^t(y)}_2\geq r\Big)\leq \abso{O_x}\cdot(d+1)\cdot \exp{\Bigg(\frac{-r^2/2}{\frac{V_{t}}{M}+\frac{2r\cdot \xi_t}{3 M}}\Bigg)}.
        \end{align*}
        Thus, by choosing $M\geq \ln{\Big(\frac{\abso{O_x}\cdot(d+1)}{\delta}\Big)}\cdot\Big(\frac{2V_t}{r^2}+\frac{4\xi_t/3}{r}\Big) = C \cdot\max \Big(\frac{V_{t}}{r^2}, \frac{\xi_t}{r}\Big)\log\Big(\frac{d\vert{O_x}\vert}{\delta}\Big)$, we guarantee that
        \begin{align*}
            \mathbb{P}\Big(\norm{\widehat{\Phi}^t_M(y)-\overline{\Phi}^t(y)}_2 \leq r, \, \forall y\in O_x\Big) \geq 1-\delta.
        \end{align*}

        As for case 2, let us WLOG assume that $K\geq 1$ -- the other case is exactly the same. Let $Q$ be an $\tfrac{\varepsilon}{3K}$- net for $O_x$. By the above argument, we get that \eqref{eq:individual_point} for $r=\tfrac{\varepsilon}{3K}$ and every $y\in Q$ with a failure probability smaller than stated. This is however enough to get the inequality with $r=\varepsilon$ on the entire orbit: First, notice that both $\widehat{\Phi}_M^t$ and $\overline{\Phi}^t$ also are $K$-Lipschitz continuous as averages of $K$-Lipschitz continuous functions. Now, if $z\in O_x$ is arbitrary, we can argue that there exists a point $y\in Q$ with $\norm{z-y}\leq \tfrac{\epsilon}{3K}$, and consequently
        \begin{align*}
             \norm{\widehat{\Phi}^t_M(z) -\overline{\Phi}^t(z)}_2 &\leq  \norm{\widehat{\Phi}^t_M(z) -\widehat{\Phi}^t_M(y)}_2 +  \norm{\widehat{\Phi}^t_M(y) -\overline{\Phi}^t(y)}_2\ +  \norm{\overline{\Phi}^t(y) -\overline{\Phi}^t(z)}_2 \\
             &\leq K \cdot \tfrac{\epsilon}{3K} + K\cdot\tfrac{\epsilon}{3K}  + K\cdot \tfrac{\epsilon}{3K} \leq \varepsilon,
        \end{align*}
        where we in the final estimate used $K\geq 1$. The claim follows.
    \end{proof}
    
    The above bounds of course do not say much without an estimate of $\vert{O_x}\vert$ and $\calN(O_x,\tfrac{\varepsilon}{3K})$. As for the former, an obvious upper bound is $\abso{O_x}\leq \abso{G}$, yielding the statement that an ensemble with $O(\log(\vert{G}\vert))$ members is enough to obtain approximate equivariance for a given element $x$, which in terms of the group size is a relatively mild growth. In special cases, $\vert O_x \vert$ can be much smaller than $\vert G \vert$. An example is the standard action of the permutation group acting on $\R^n$: Then, $\vert{G}\vert=n!$, but $\vert O_x \vert \leq n$.

    A natural use case for the infinite orbits is the case of $G$ being a connected Lie group. In that case, the following proposition shows that the logarithm of the covering number does not grow faster than the dimension of $G$. 

    \begin{proposition}[Bound on the covering number of the orbit]
        Let $G$ be a connected lie Group. Then, we can bound
        \begin{align*}
            \log \calN\left(O_x, \frac{\varepsilon}{3\max(K,1)}\right) \leq \dim(G) \cdot \log\left(1+\tfrac{1}{\varepsilon}\cdot\max(K,1) \cdot C(\mathrm{diam}(G),\rho_0) \cdot \norm{x}_2 \right),
        \end{align*}
        where $C(\mathrm{diam}(G),\rho)$ is a constant dependent on the diameter of the group and the representation $\rho_0$.
    \end{proposition}
    \begin{proof}
        Let us begin by recalling some basic facts about compact, connected Lie groups (see e.g. \cite{fulton2004representation}). Let $\mathfrak{g}$ denote the Lie algebra of $G$ and $\exp: \mathfrak{g} \to G$ the exponential map. Since $G$ is a connected, compact Lie group, we know that
        \begin{align}
            G = \{\exp(H) \, \vert \, \norm{H}\leq \mathrm{diam}(G)\}.
        \end{align} 
        Furthermore, since $\rho_0$ is a representation, we have $\underline{\exp}(\mathrm{d}\rho_0(H))= \rho_0(\exp(H))$, where $\mathrm{d}\rho_0:\mathfrak{g} \to \mathfrak{gl}(X)$ is the differential of $\rho_0:G \to \mathrm{GL}(X)$ at $e$, and $\underline{\exp}$ on the left is the exponential map $\mathfrak{gl}(X)\to GL(X)$:
        \begin{align*}
            \underline{\exp}(A) = \sum_{k=0}^\infty \tfrac{1}{k!}A^k
        \end{align*}
        It is well known that $\underline{\exp}$  is differentiable, with a derivative that fulfills
        \begin{align*}
            \norm{\underline{\exp}'(A)B}_{2\to 2} \leq \exp(\norm{A}_{2\to 2})\norm{B}_{2\to 2}.
        \end{align*}
        Hence, for $H,H'\in \mathfrak{g}$ with $\norm{H},\norm{H'}\leq \mathrm{diam}(G)$, we get
        \begin{align*}
            &\norm{\rho_0(\exp(H))x-\rho_0(\exp(H'))x}_2 \leq  \norm{\underline{\exp}(\mathrm{d}\rho_0(H))-\underline{\exp}(\mathrm{d}\rho_0(H')}_{2\to 2} \norm{x}_2 \\
            &\qquad \qquad \leq \sup_{\norm{\widetilde{H}}\leq \mathrm{diam}(G)} \exp(\norm{\mathrm{d}\rho_0(\widetilde{H})}_{2\to 2}) \cdot \norm{\mathrm{d}\rho_0(H))-\mathrm{d}\rho_0(H')}_{2\to 2} \cdot \norm{x}_2 \\
            &\qquad \qquad \leq \sup_{\norm{\widetilde{H}}\leq \mathrm{diam}(G)} \exp(\norm{\mathrm{d}\rho_0(\widetilde{H})}_{2\to 2}) \cdot \norm{\mathrm{d}\rho_0}\cdot \norm{H-H'}\cdot \norm{x}_2 \\
&\qquad \qquad =: C(\mathrm{diam}(G), \rho_0) \cdot \norm{H-H'} \cdot \norm{x}_2       \end{align*}
    
        From this inequality, we quickly derive that if $P$ is a net of width $\tfrac{\varepsilon}{3\cdot\max( K,1)}~\cdot~{C(\mathrm{diam}(G), \rho_0) \norm{x}_2)}$ of the ball of radius $\mathrm{diam}(G)$ in $\mathfrak{g}$, the set $\{\rho_0(\exp(H))x \, \vert H \in P\}$ is a net of width $\varepsilon/(3\max(K,1))$ of the orbit. It is well known that such a net $P$ can be constructed of cardinality less than 
        \begin{align*}
            \left(1+\tfrac{2\cdot\mathrm{diam}(G)\cdot 3\max(K,1) \cdot C(\mathrm{diam}(G),\rho_0) \cdot \norm{x}_2}{\varepsilon}\right)^{\mathrm{\dim}(G)}.
        \end{align*}
        (see e.g. the appendix of \cite{FouRau2013}). We here used that $\dim \mathfrak{g}=\dim G$. The claim follows.
    \end{proof}
The moral of the above result is that when $G$ is a Lie-group,  $O(\dim(G))$ ensemble members are needed to realise equivariance approximately (where the implicit constant depends on $\rho_0$ and $\mathrm{diam}(G)$). 

\section{Numerical experiments}
\label{sec:exp}
We perform two small numerical experiments. Their purpose is to test the robustness of our theorems to using the sample mean instead of the true ensemble mean, the necessity of the $\rho$-invariance assumption of $\calL$, and  the $\rho$-invariance of the initial distribution of parameters. The code, that relies heavily of the code made available from \cite{nordenfors2024optimizationdynamicsequivariantaugmented}, can be found 
at \href{https://github.com/onordenfors/ensemble_experiment}{https://github.com/onordenfors/ensemble\_experiment}
and additional technical details (such as hardware specifications, licenses for the datasets, etc.) in Appendix \ref{app:experiment}.

 We use two metrics to evaluate the invariance of the trained models. First, similarly to \cite{gerken2024emergentequivariancedeepensembles}, 
we calculate the average \emph{orbit same prediction} (OSP). That is, for each test image, we test how many of the transformed versions of it are given the same prediction as the untransformed one by the ensemble model. OSP lies between $1$ and $\abso{G}$, where \text{a value equal to the order of the group $\abso{G}$} indicates perfect invariance. 
The second metric we calculate is the average symmetric KL-divergence $D_{\mathrm{KL}}$ between the class probabilities predicted on \text{an} example with each of the \text{transformed} versions of it. Note that the class probabilities can vary quite a lot but still yield the same prediction, whence this is a finer measure of invariance than the OSP. 

\subsection{Experiment 1: Discrete rotations of images}
We train ensembles of 1000 small CNNs 
with three convolutional layers of size 16 followed by one fully connected one, with layer normalization \cite{ba2016layer} and $\tanh$ non-linearities. See also Figure \ref{fig:architecture}. We train families of nearly identical ensembles, the only difference being that the support of the filters: the members of one ensemble have filters of symmetric support, whereas those of the other have asymmetric support. The models are trained for $10$ epochs on the MNIST dataset \cite{lecun1998a}, using SGD with a constant learning rate of $0.01$, a batch size of 32, and cross entropy loss. We apply random augmentation, and test the ensembles on both MNIST and CIFAR-10 \cite{krizhevsky09}.

\begin{figure}[!h]
        \centering
        \begin{tikzpicture}[scale=0.38]
            \draw (0,0) rectangle (1,3);
            \node at (0.5,1.5) {$X$};
            \node at (0.5,4) {$\rho^{\mathrm{rot}}$};
            \node at (0.5,-1) {$\R^{1\times 28\times 28}$};
            \node at (4,1) {$\tanh$, \scriptsize{LN}};
            \node at (4,2) {Conv, Pool,};
            \draw (7,0) rectangle (8,3);
            \node at (7.5,1.5) {$X_1$};
            \node at (7.5,-1) {$\R^{16\times14\times 14}$};
            \node at (11,1) {$\tanh$, \scriptsize{LN}};
            \node at (11,2) {Conv, Pool,};
            \draw (14,0) rectangle (15,3);
            \node at (14.5,1.5) {$X_2$};
            \node at (14.5,-1) {$\R^{16\times7\times 7}$};
            \node at (18,1) {$\tanh$, \scriptsize{LN}};
            \node at (18,2) {Conv,};
            \draw (21,0) rectangle (22,3);
            \node at (21.5,1.5) {$X_3$};
            \node at (21.5,-1) {$\R^{16\times7\times 7}$};
            \node at (25,1) {\scriptsize{FC}};
            \node at (25,2) {Flatten,};
            \draw (28,0) rectangle (29,3);
            \node at (28.5,1.5) {$Y$};
            \node at (28.5,-1) {$\R^{10}$};
            \node at (28.5,4) {$\rho^{\mathrm{triv}}$};
            \draw[->] (1.5,1.5) -- (6.5,1.5);
            \draw[->] (8.5,1.5) -- (13.5,1.5);
            \draw[->] (15.5,1.5) -- (20.5,1.5);
            \draw[->] (22.5,1.5) -- (27.5,1.5);
        \end{tikzpicture}
        
        \caption{The architecture used in our neural networks. The convolutions have filters with support as in Figure \ref{fig:supports} (left or right). LN stands for LayerNorm and FC stands for Fully-Connected.}
        \label{fig:architecture}
\end{figure}

\paragraph{Case 1: $C_4$} We first choose $C_4$, i.e. rotations of multiples of $90^\circ$, as our symmetry group. For this $G$, $\calLsym$ is $\rho$-invariant, but $\calLas$ is not (this is actually true for all possible choices of the intermediate representations -- see Appendix \ref{app:skew}). Hence,  we expect that the former ensembles become equivariant, whereas the latter do not. We initialize the ensembles by drawing the coefficients $c$ from a standard normal distribution (see Lemma \ref{lem:gaussinit}). Since $\calLas$ is not $\rho$-invariant, this does not produce an invariant initial distribution. We can mitigate this by setting the 'corner pieces' of all asymmetric filters equal to zero. We hence get three ensembles: One symmetric, one asymmetric initialized $\rho$-invariantly, and one asymmetric initialized ''naïvely''.

In Figure \ref{fig:C4} (top), we plot the values of our two metrics on the two datasets at the end of epoch 10, for different number of ensemble members. We see that for all models, the ensembles become more equivariant as the number of members grows, showcasing the inherent 'equivariazing' effect of ensembling well. On MNIST, the larger ensembles come close to a perfect OSP of $4$. However, it can also be observed that ensembles of a size as small as 50 have an OSP above 3.5 \emph{on out-of-distribution data}. We also see that the symmetric ensembles outperform the asymmetric ones initialized invariantly, which in themselves outperform the naïvely initialized asymmetric ones.  This suggests that $\rho$-invariance of the architecture also plays an important role empirically. The differences are more prominent on the OOD data, which is to be expected -- the networks have actually 'seen' rotated nines in the MNIST data and can hence have learned to predict them correctly even if the network is not inherently equivariant -- the same is not true for rotated cars in the CIFAR10 set.

More detailed results are given in Appendix \ref{app:tables}. We there in particular showcase the evolution of the ensembles during training there, in Figure \ref{fig:cross_vs_skew} (there are no surprises - the ensembles are quite far from exact equivariance at their completely random initialized state, but become essentially as equivariant as after ten epochs already after one epoch).  

\begin{figure}[!h] \centering
    \includegraphics[width=0.6\linewidth]{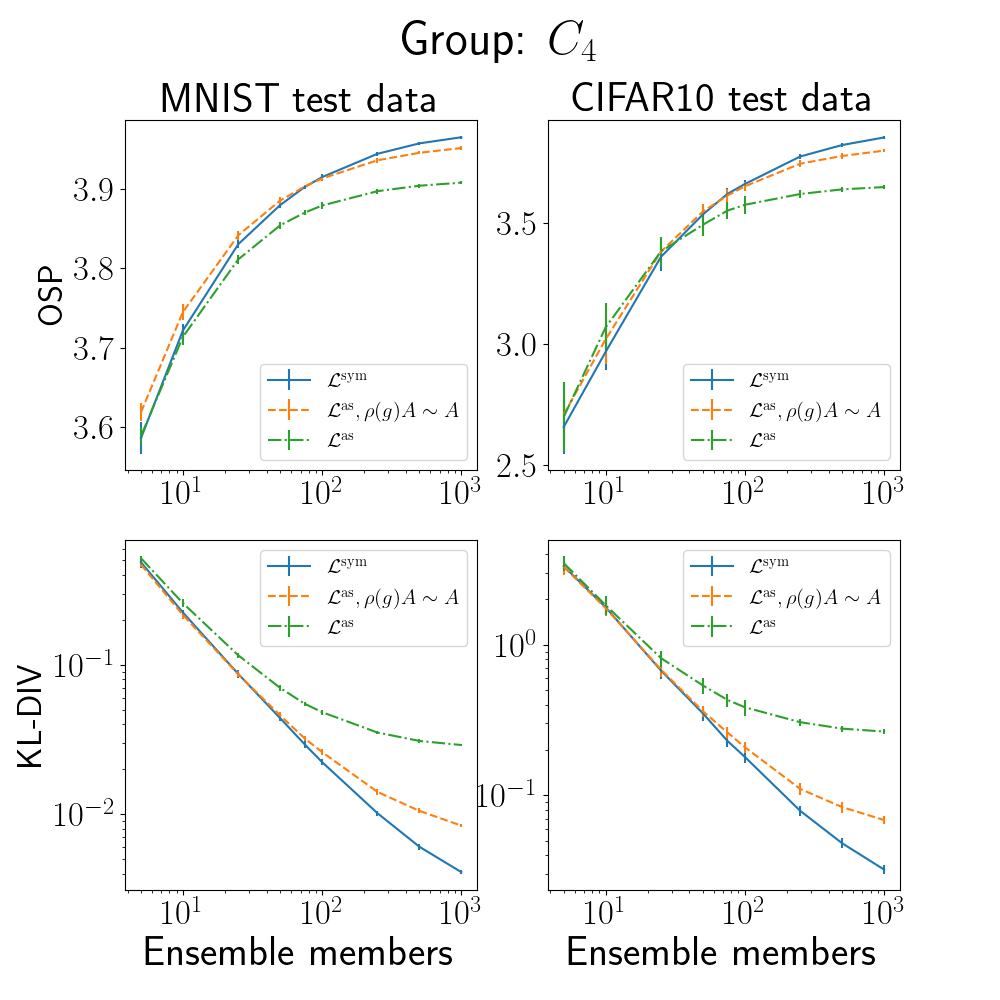} \\
    \includegraphics[width=0.6\linewidth]{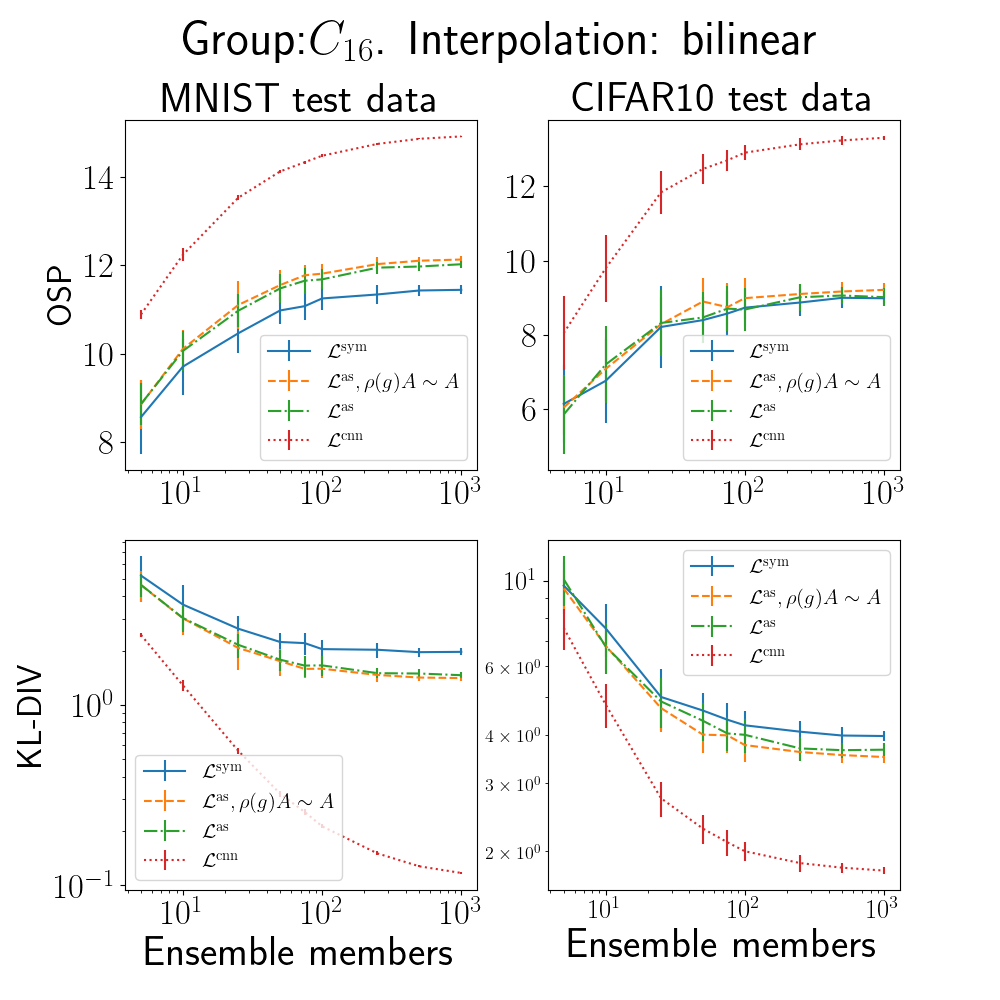}
 \caption{Metrics after the 10\ts{th} epoch for different ensemble sizes for the $C_4$ experiment (top) and $C_{16}$ experiment (bottom). Each datapoint is a mean of 30 bootstrapped examples -- the errorbars denotes one standard deviation of the bootstrap.  The $x$-scale in the top plots are logarithmic, both scales are logarithmic in the bottom plots. Best viewed in color. \label{fig:C4}}
\end{figure}

\paragraph{Case 2: $C_{16}$} We perform a second set of experiments with $C_{16}$, i.e., rotations of multiples of 22.5$^\circ$, as the symmetry group. One should note that in this setting, neither $\calL^{\mathrm{sym}}$ nor $\calL^{\mathrm{assym}}$ are $\rho$-invariant --  rotating a symmetric filter by 45 degrees will send it to a filter supported on the corners of the $3\times 3$ square, and so on. We therefore include a fourth model in these experiments, namely a standard, full $3\times3$-support CNN. Note that due to interpolation effects, the corresponding space $\calL^{\mathrm{cnn}}$ is also not perfectly $C_{16}$ invariant, but at most approximately. We here use a bilinear interpolation -- results for a 'nearest' interpolation, as well as an illustration of the non-invariance issues, are given in Appendix \ref{app:nearest}. The experimental setup is exactly as above (except a batch size of $128$, to speed up training).

The performance with respect to the different metrics for the four models at epoch $10$ is presented in Figure \ref{fig:C4} (right) (more details again in Appendix \ref{app:experiment}). Compared to the $C_4$ experiment, all models fare much worse -- no model comes close to the optimal OSP of $16$. Note that is what to be expected from our theory -- the compatibility assumption is not fulfilled for any model. We also see that the standard CNN:s outperform the other models vastly -- which is also what could be expected, given that the $3\times3$-square is closer to being an invariant support than the symmetric and asymmetric supports are.

\paragraph{The necessity of $\rho$-invariance.} In our theoretical results, $\rho$-invariance of $\calL$ is shown to be \emph{sufficient} to achieve equivariant ensembles. What does our experiments say about the \emph{necessity} of it? This is a subtle point, that requires a nuanced response. As already discussed, the results indicate the \emph{relevance} of the condition. In both of our experiments, the cases with (more) invariant $\calL$-spaces outperformed the other ones. On the other hand, the ensembles with non-invariant $\calL$ are also surprisingly equivariant. In the $C_{16}$ experiments, the $CNN$-models achieve an OSP of over $13$ on out-of-distribution data, and the different restricted architectures around $9$, although neither $\calL^{\mathrm{cnn}}$, nor the other spaces, are perfectly invariant. This is not close to the perfect $16$, but also far from the value of $2.5$ one expects from a completely random model. 

We can at this point only speculate on why ensembles with non-invariant $\calL$ become as equivariant as they do. A possible explanation is revealed by our proofs: There we need $\rho$-invariance to guarantee $\rho(g)\Pi_{\calL}=\Pi_\calL\rho(g)$ for all $g$. In all of our cases, $\rho(g)\Pi_\calL$ is still ''almost'' equal to $\Pi_\calL\rho(g)$ for many $g$. For instance, in the $C_4$-case, $\Pi_{\calL}\rho(g)=\Pi_{\calL} \rho(g)$ only fails due to one non-zero corner in the $\calL^{\mathrm{asym}}$-filters. We investigate this point further in Appendix \ref{app:approxerror}, where we perform $C_4$-experiments with $5\times 5$-filters, which contain more energy in the asymmetric part. Our results there support this hypothesis somewhat, but more work is needed to draw definitive conclusions.



\subsection{Experiment 2: Continuous rotations of point clouds}
In another set of experiments, we train 1000 PointNets \cite{qi2017pointnet,qi2017pointnet++} on the ModelNet10 dataset \cite{wu20153d}, using the Pytorch Geometric library \cite{fey2019fast}. PointNets are message passing networks built for processing graph data, with small MLPs as message passing functions. We again use LayerNormalization and $\tanh$ activation, and use a fully connected linear layer on top of a PointNet layer to classify point clouds. The invariance of $\calL$ essentially follows from the fact that all components of the network are MLPs, and the discussion in Example \ref{ex:prepost}. A more detailed description, along with an argument for why it corresponds to an $\calL$ invariant under the natural action of $\mathrm{SO}(3)$ on point clouds in $\R^3$, can be found in Appendix \ref{app:pndetails}.

\begin{figure}[!h]
    \begin{minipage}[c]{.5\textwidth}
        \includegraphics[width=.9\linewidth]{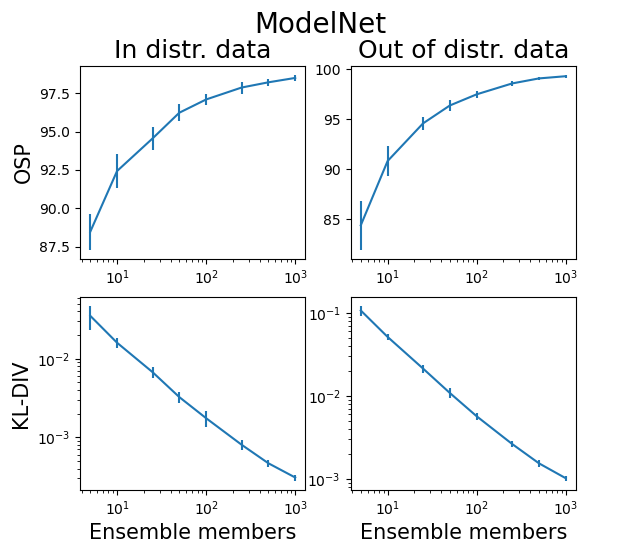}
    \end{minipage}
    \begin{minipage}[c]{.45\textwidth}
         \caption{Metrics after the 10\ts{th} epoch for different ensemble sizes for the ModelNet experiment. Each datapoint is a mean of 30 bootstrapped examples -- the errorbars denotes one standard deviation of the bootstrap.  The $x$-scale in the top plots are logarithmic, both scales are logarithmic in the bottom plots.\label{fig:ModelNet}}
    \end{minipage}
\end{figure}

We proceed in large as above: after training 1000 PointNets for 10 epochs with SGD with learning rate of 5e-3, we bootstrap 30 ensembles for different sizes and record the metrics on both in- and out-of-distribution data. The in-distribution data in this case is the test data set of ModelNet10, while the out of distribution data are test data from ten classes of ModelNet40 not included in ModelNet10, namely \emph{airplane, bench, bookshelf, bottle, bowl, car, cone, cup, curtain, door}.

For each dataset, we calculate the same metrics as in Experiment 1, that is the OSP and symmetric KL-divergence. However, since $\SO(3)$ is infinite, we of course cannot use \emph{all} transformations to calculate these. We instead use 100 uniformly random sampled ones to calculate the metrics. Since none of the rotations is the identity (with probability 1), the OSP is between 0 and 100 in this case. The results can be viewed in Figure \ref{fig:ModelNet}. It should here be mentioned that the individual models suffer from overfitting, which partially explains why smaller ensembles are already quite equivariant. We discuss this in Appendix \ref{app:pndetails7}. However, this does not change the fact that ensembling \emph{increases} the equivariance, which is what the theory predicts. A t-test ($p<.001$) reveals that the difference in equivariance is significant -- all metrics are significantly better than all others for the 1000-member ensembles.

\subsection*{Conclusion}

In this paper, we have shown that ensembles of neural networks become equivariant when trained with mini-batch SGD on randomly augmented data. This result generalizes previous works in that they apply to finite-width neural networks, and more importantly, to more general architecture than just MLPs. In regards to the latter, our analysis revealed that the geometry of the architecture plays a role. When their architecture spaces fulfill a geometric invariance condition, the equivariance of the ensembles follow.

\subsection*{Acknowledgment}
\label{sec:ack}
This work was partially supported by the Wallenberg AI, Autonomous Systems and
Software Program (WASP) funded by the Knut and Alice Wallenberg Foundation. The computations were enabled by resources provided by the National Academic Infrastructure for Supercomputing in Sweden (NAISS), partially funded by the Swedish Research Council through grant agreement no. 2022-06725. We also thank the anonymous reviewers of previous versions of this manuscript, whose comments have helped to improve it.


\begin{appendices}

\section{Convolutions with asymmetric filters and \texorpdfstring{$C_4$}{C4}-invariance}
\label{app:skew}
In the main paper, we have used the 'convolution-spaces' $\calLsym$ and $\calLas$ as running examples. We have argued that if the canonical representation $\rhorot$ is used on all intermediate spaces, $\calLsym$ is invariant under all transformations $\rho(g)$, whereas $\calLas$ is not. However, as we remarked in Example \ref{ex:convcond}, there is no inherent reason why the actions on the intermediate spaces should always be $\rhorot$. Hence, there might exist intermediate representations so that $\calLas$ is $\rho$-invariant. The purpose of this section is to show that if we fix the representation $\rhorot$ on $X$, there is not. To do this, it is clearly enough to focus on the space of convolutional maps between $X_0=\R^{n,n}$ and $X_1= (\R^{n,n})^d$.

To make the case clear that this is an important and nontrivial point, let us first show that there are representations other than $\rhorot$ on $X_1$ which makes $\calLsym$ invariant. Th this end, let $\rhoind$ be a representation of $C_4$ on $\R^d$. Such exist: Note that $C_4$ is isomorphic to $\Z_4$, the integers equipped with addition modulo $4$, and $\Z_4$ naturally acts on the space $\R^4$ by shifting the entries, i.e $(\varrho(k)v)_\ell = v_{\ell-k}$, which then also does $C_4$. In obvious ways, this can be extended to any $\R^d$. For any representation $\rhoind$ of $C_4$ on $\R^d$, we can combine it  with $\rhorot$ as follows:
\begin{align} \label{eq:prodrep}
    \big((\rhoind \odot \rhorot)(g) x\big)_k = \bigg( \sum_{\ell\in [d]}\rhoind(g)_{k\ell} \rhorot(g) x_\ell\bigg)_k.
\end{align}
In words, $\rhoind \odot \rhorot$ first rotates each entry of a tuple $x$ according to $\rhoind(g)$, and then transforms the $d$ resulting $\R^{N,N}$-images by $\rhoind$ as if they were entries in a $\R^d$-vector. A straightforward calculation  now shows that the lifted representation $\widehat{\rho_0}(g)$ then maps $\sprod{\psi}$ to
\begin{align} \label{eq:liftedrep}
    \widehat{\rho_0}(g)\sprod{\psi}=\sprod{\big(\sum_{\ell} \rhoind(g)_{k\ell} \rhorot(g)\psi_\ell\big)_k}.
\end{align}
Consequently, if the filters $\psi_\ell$ are symmetric filters, the transformed ones are also. On the other hand, if the $\psi$ are supported on the asymmetric support, the linear combinations in \eqref{eq:liftedrep} will in general not be, and hence $\rho(g)\calL\not\subseteq \calL$ also for intermediate representations of the form \eqref{eq:prodrep}.

The last point opens up a route to prove that there are no representations $\rho_1$ at all that makes $\calLas$ invariant under the corresponding lifted representation: If we show that if  $\rho_1$ is a representation for which
    $\rho(g)\calLas\sse \calLas$ for all $g$, then it must be as in \eqref{eq:prodrep}, then we are by the previous discussion done. This is the purpose of the following theorem.

\begin{theorem}[Asymmetric CNNs are not invariant under action of $C_4$]
    Let $N\geq 3$, and let $C_4$ act through $\rhorot$ on $X$. If $\rho_1$ is a representation such that $\calLas$ is $\rho$-invariant under the lifted representation, it must be as in  \eqref{eq:prodrep}. Since no such representations make $\calLas$ $\rho$-invariant, there are  no representations of $C_4$ on $X_1$ that make $\calLas$ $\rho$-invariant.
\end{theorem}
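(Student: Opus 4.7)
I would follow the two-step structure suggested by the statement: first show that any representation $\rho_1$ of $C_4$ on $X_1$ under which $\calLas$ is invariant must take the product form $\rhoind \odot \rhorot$; then invoke the preceding discussion (around \eqref{eq:liftedrep}) to conclude that no such product representation actually works, and therefore no $\rho_1$ at all works.

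For the reduction step, I would begin from the observation that every $\widehat{\rho_0}(g)\sprod{\psi} = \rho_1(g) \circ \sprod{\psi} \circ \rhorot(g)^{-1}$ must lie in $\calLas$ and hence in particular be a convolution between $X_0$ and $X_1$. Since convolutions are precisely the operators intertwining translations on $X_0$ with the diagonal translations $T^{(d)}_v = \mathrm{id}_{\R^d} \otimes T_v$ on $X_1$, combining with the translation equivariance of $\sprod{\psi}$ and the relation $\rhorot(g)^{-1} T_v = T_{g^{-1}v} \rhorot(g)^{-1}$ yields
\begin{align*}
\rho_1(g)\, T^{(d)}_{g^{-1}v} = T^{(d)}_v\, \rho_1(g)
\end{align*}
on the image of each $\sprod{\psi}$. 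By choosing one-hot filters $\psi_{k_0} = \delta_w$ for $w \in \Omega_{\mathrm{as}}$, these images jointly span $X_1$, so the identity extends to all of $X_1$. Consequently $\rho_1'(g) := (\mathrm{id}_{\R^d} \otimes \rhorot(g))^{-1} \rho_1(g)$ commutes with every diagonal translation, and by the standard commutant description it must be a matrix-valued convolution, $(\rho_1'(g) x)_k = \sum_\ell K_{k\ell}(g) * x_\ell$ for some convolution kernels $K_{k\ell}(g)$.

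To force each $K_{k\ell}(g)$ down to a scalar multiple of $\delta_0$, I would use the asymmetric support constraint. A direct calculation (using $\psi_k * \rhorot(g)^{-1}x = \rhorot(g)^{-1}(\rhorot(g)\psi_k * x)$) gives $\widehat{\rho_0}(g)\sprod{\psi} = \sprod{\rhorot(g)\tilde\psi}$ with $\tilde\psi_k = \sum_\ell K_{k\ell}(g) * \psi_\ell$, and the condition $\widehat{\rho_0}(g)\sprod{\psi} \in \calLas$ becomes the requirement that $\tilde\psi_k$ be supported in $g^{-1}\Omega_{\mathrm{as}}$. Testing with one-hot impulses $\psi_{\ell_0} = \delta_w$ ($w \in \Omega_{\mathrm{as}}$) then yields
\begin{align*}
\mathrm{supp}(K_{k\ell}(g)) \subseteq \bigcap_{w \in \Omega_{\mathrm{as}}}\bigl(g^{-1}\Omega_{\mathrm{as}} - w\bigr).
\end{align*}
A short combinatorial check on the $5$-point asymmetric cross (using $N \geq 3$ to prevent cyclic collisions on $\Z_N^2$) shows this intersection is contained in $\{0\}$: $0$ would lie in it iff $g\,\Omega_{\mathrm{as}} = \Omega_{\mathrm{as}}$, which fails for every nontrivial $g \in C_4$, while any nonzero element $v$ would force $\Omega_{\mathrm{as}} + v = g^{-1}\Omega_{\mathrm{as}}$, ruled out by comparing centroids. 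Therefore each $K_{k\ell}(g)$ is a scalar multiple of $\delta_0$; collecting these scalars into $\rhoind(g) \in \mathrm{End}(\R^d)$ gives $\rho_1(g) = \rhoind(g) \otimes \rhorot(g) = (\rhoind \odot \rhorot)(g)$, which is exactly \eqref{eq:prodrep}.

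The group-homomorphism property of $\rho_1$ transfers to $\rhoind$, so in particular each $\rhoind(g)$ must be invertible. The discussion immediately following \eqref{eq:liftedrep} now finishes the argument: testing \eqref{eq:liftedrep} on $\psi_{\ell_0} = \delta_w$ for some $w \in \Omega_{\mathrm{as}}$ with $gw \notin \Omega_{\mathrm{as}}$ (which exists by the asymmetry of $\Omega_{\mathrm{as}}$, e.g.\ $w = (-1,1)$ for the $90^\circ$ rotation, giving $gw = (-1,-1) \notin \Omega_{\mathrm{as}}$) forces $\rhoind(g)_{k\ell_0} = 0$ for every $k$; repeating for each $\ell_0$ gives $\rhoind(g) = 0$ for every nontrivial $g$, contradicting invertibility. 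The main obstacle I anticipate is the combinatorial support computation verifying $\bigcap_w (g^{-1}\Omega_{\mathrm{as}} - w) \subseteq \{0\}$ simultaneously for all nontrivial $g \in C_4$ and all $N \geq 3$, together with the care needed in the spanning argument used to extend the intertwining identity from single images of $\sprod{\psi}$ to all of $X_1$.
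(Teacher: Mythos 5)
Your proposal is correct and follows essentially the same route as the paper: reduce to the fact that $\rho_1(g)$ composed with $\rhorot(g)^{-1}$ must commute with translations (via one-hot filters spanning $X_1$), hence is a matrix of convolutions, then use the asymmetric support constraint to collapse the kernels to multiples of $\delta_0$, yielding the product form \eqref{eq:prodrep}, which the preceding discussion rules out. Your explicit support-intersection and centroid computation, and the explicit $\rhoind(g)_{k\ell_0}=0$ contradiction at the end, are just more quantitative versions of the steps the paper carries out.
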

\begin{proof}

    For any representation $\rho_1$ on $X_1$, we have
    \begin{align*}
       \rho(g)\sprod\varphi = \rho_1(g)\sprod\varphi \rhorot(g)^{-1} &= \rho_1(g)\rhorot(g)^{-1}\rhorot(g)\sprod\varphi \rhorot(g)^{-1} \\&=\rho_1(g)\rhorot(g)^{-1}\sprod{\rhorot(g)\varphi}.
    \end{align*}
    Now, we assume that $\rho_1(g)$ is a representation for which $\rho(g)\calL \sse \calL$. Then, the above operator or all $\varphi$ and $g$ must be of the form $\sprod{\psi}$  for some filter $\psi$. Since convolutions commute with any translation $T_\ell$, we can then argue that
    \begin{align}
        T_\ell \rho_1(g)\rhorot(g)^{-1} \sprod{\rhorot(g)\varphi} &= T_\ell \sprod{\psi} = \sprod{\psi} T_\ell =  \rho_1(g)\rhorot(g)^{-1}\sprod{\rhorot(g)\varphi}T_\ell \nonumber \\
        &= \rho_1(g)\rhorot(g)^{-1}T_\ell \sprod{\rhorot(g)\varphi} \label{eq:prototranslommutation}
    \end{align}
    where we again did not distinguish between translation representation on $X$ and the direct product of them on $X_1$. Since this equality is true for any $\varphi$, this implies that 
    \begin{align}
          T_\ell \rho_1(g)\rhorot(g)^{-1} = \rho_1(g)\rhorot(g)^{-1}T_\ell\label{eq:translommutation}
    \end{align}
    for all $g$ and $\ell$. A more technical argument goes as follows: By choosing $\varphi$ equal to the tuple with only a non-trivial filter in the $i$:th channel, that filter to have only one non-zero pixel, and subsequently evaluating \eqref{eq:prototranslommutation} on a basis of $X$, we get the desired equality of operators on a basis of $X_1$.

    Now, \eqref{eq:translommutation} simply means that the operator $\rho_1(g)\rhorot(g)^{-1}$ commutes with translations for every $g$. It is well known that this is  equivalent to $\rho_1(g)\rhorot(g)^{-1}$ being a convolution operator $\sprod{\chi(g)}$ for some filters $\chi_{k\ell}(g)$, $k, \ell \in [d]$. Inserting this form into the definition of $\rho$ yields
    \begin{align}
        \rho(g)\sprod\varphi = \sprod{\chi(g)} \sprod{\rhorot(g)\varphi} = \bigg\langle \sum_{\ell \in [d]} \chi_{k\ell}(g)*(\rhorot(g)\varphi_\ell) \label{eq:newfilter}\bigg \rangle 
    \end{align}
     Now, suppose that any filter $\chi_{k\ell}(g)$ has a support of more than one pixel. Then, since $N \geq 3$, we can construct a filter $\varphi_\ell$ supported on the asymmetric set so that the support of $\chi_{k\ell}(g)*(\rhorot(g)\varphi_\ell)$ is not -- essentially, the convolution by $\chi_{k\ell}(g)$ would spread out the support of $\varphi_\ell$.  By letting all other filters in a tuple be zero, we would conclude that the filter
     \begin{align*}
       \sum_{\ell \in [d]} \chi_{k\ell}(g)*(\rhorot(g)\varphi_\ell)
     \end{align*}
     has a support which is not contained in the asymmetric $\Omega_0$. Together with \eqref{eq:newfilter}, this means that $\rho(g)\sprod{\varphi}\notin \calLas$, which would be a contradiction. Hence, all $\sprod{\chi_g}$ must be multiples of the identity for all $g$, which means that 
     \begin{align*}
         [\rho_1(g)v]_k = \sum_{\ell \in [d]} c_{k\ell}(g)\rhorot(g)x_{\ell}
     \end{align*}
     for some numbers $c_{k\ell}(g)$. It is now only left to show that the $c_{k\ell}$ define a representation. We however have for $g,h\in C_4$ arbitrary
     \begin{align*}
         [\rho_1(gh)x]_k &= \sum_{\ell \in [d]} c_{k\ell}(gh)\rhorot(gh)x_{\ell} \\
         [\rho_1(g)\rho_1(h)x]_k& = \sum_{\ell \in [d]} c_{k\ell}(g)\rhorot(g)(\rho_1(g)x)_{\ell} = \sum_{\ell,m \in [d]}c_{km}(g)\rhorot(g)c_{m\ell}(h)\rhorot(h)x_{\ell}
     \end{align*}
     Since $\rho_1$ and $\rhorot$ are representations, the above expressions are equal for every $x$ and $g$, which means that
     \begin{align*}
     \forall k, \ell \in [d]: \quad      c_{k\ell}(gh) = \sum_{m \in [d]}c_{km}(g)c_{m\ell}(h)
     \end{align*}
     This is however only another way of saying that the matrices $C(g)=(c_{k\ell}(g))_{k,\ell}$ fulfill $C(gh)=C(g)C(h)$, which is what was to be shown.
\end{proof}

\section{Discrete Rotation experiment details}

\label{app:experiment}
\subsection{Hardware}
The $C_4$-experiments were performed on a cluster using NVIDIA Tesla T4 GPUs with 16GB RAM per unit. The total compute time for training all 3000 individual models for $10$ epochs each is estimated at $\sim 115$ hours. The $C_{16}$-experiments were made on the same cluster, but instead using NVIDIA Tesla A40 GPUs with 64GB RAM per unit. These experiments (\texttt{BILINEAR} and \texttt{NEAREST}) used a total of $\sim 800$ compute hours. This estimate does not include test runs during debugging etc.

\subsection{Licenses}
MNIST is available under an CC BY-SA 3.0 licence. CIFAR10 does not have a formal license, but is publically available at \url{https://www.cs.toronto.edu/~kriz/cifar.html}.

\subsection{Architecture}
The architecture used for all three networks is the same with the exception of the support of the $3\times 3$ filters. Namely, in the first layer we use a convolution with zero padding, with $1$ channel in and $16$ out, followed by an average pooling with a $2\times 2$ window with a stride of $2$, followed by a $\tanh$ activation, followed by a layer normalization. The second layer is the same except the convolution has $16$ channels instead of $1$. The third layer is the same as the second, but without the average pooling. The final layer flattens the image channels into a vector, followed by a linear transformation into $\R^{10}$.

All of the nonlinearities involved here are equivariant to $C_4$- rotations, and so do not interfere with the equivariance/invariance of the network \text{in that case}. Due to interpolation effects, it is not equivariant to $C_{16}$ -- adding to the assumptions of our theorem that are violated in that case.


A prediction of the network is then the $\mathrm{argmax}$ of the output, which yields the predicted label as a one-hot vector.

\subsection{Evolution of the equivariance during training} \label{}

\begin{figure}[!h]
\begin{minipage}[c]{.74\textwidth}
    \centering
    \includegraphics[width=.9\textwidth]{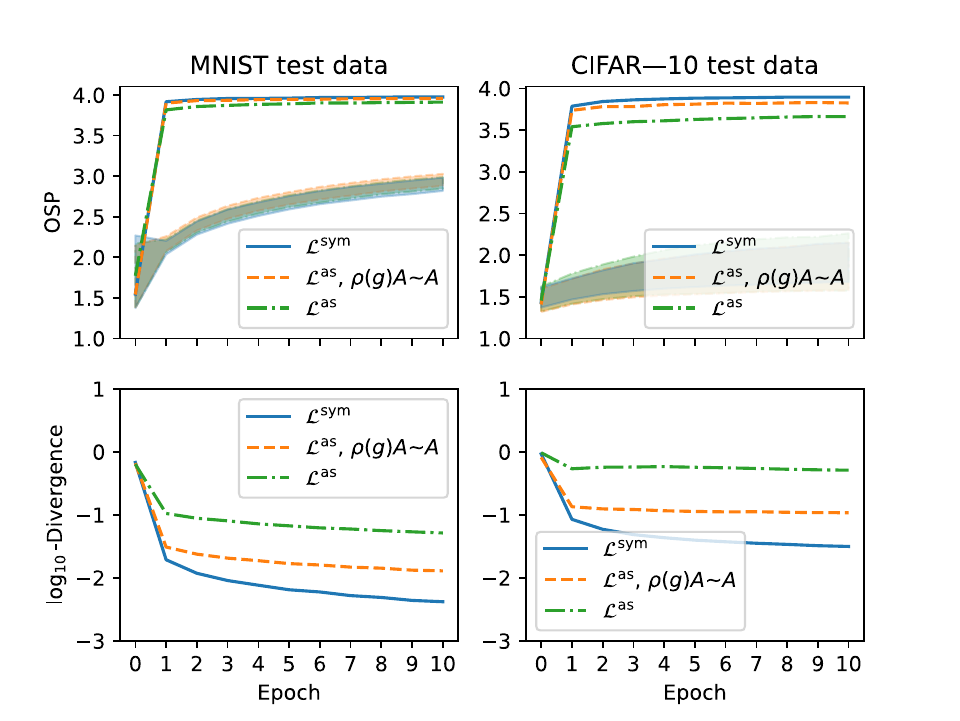}
\end{minipage}
\begin{minipage}[c]{.25\textwidth}
\caption{Top: OSP for ensembles with $1000$ members, for the $C_4$-experiments.\\
(Higher is better)\\
The middle $95\%$ of individual ensemble members are within the shaded area. \\
Bottom: Logarithm of symmetric Kullback--Leibler Divergence for ensembles with $1000$ members. \\
(Lower is better).\\Best viewed in color.}\label{fig:cross_vs_skew}
\end{minipage}
\end{figure}

 As advertised in the main paper, we here present the evolution of the metrics along the training for the $C_4$ experiment in Figure \ref{fig:cross_vs_skew}. We see that although the models become more equivariant with increased training, they practically become as equivariant as at the end of training already after one epoch. Note that the low OSP at the $0$:th epoch, i.e. at initialization, actually is to be expected. At initialization, the weights are completely random, which is therefore also to be expected for the predictions. A network with random predictions is expected to give an OSP of $1.75$, which is close to the OSP that we observe. The largeness of the other metric can mainly be attributed to the finiteness of the ensemble -- when evaluated for ensembles of size $10$ and $100$, we get values of $1.13$ and $0.58$, respectively, which is much larger than the $-0.17$ we observe for the $1000$ member ensembles. It is plausible that this trend of decreasing numbers with increasing ensemble size would continue with even more ensemble members.

\subsection{Detailed results for different ensemble sizes} \label{app:tables}
We here present our results in table form: the $C_4$-experiments in Table \ref{tab:C4}, $C_{16}$-experiments with $\texttt{BILINEAR}$ interpolation in Table \ref{tab:C16bilinear} and $C_{16}$-experiments with $\texttt{NEAREST}$ interpolation in Table \ref{tab:C16nearest}. We indicate models that perform statistically significantly better than all others (according to a $t$-test of the 30 bootstrapped examples) than all others of the experiments ($p<.001$) with bold font.

\begin{table}[h]
\begin{tabular}{|c|c|c|c|c|c|c|c|c|c|c|c|} 
 \hline & Metric & Model & 5 & 10 & 25 & 50 & 75 & 100 & 250 & 500 & 1000 \\ 
\hline
\multirow{6}{*}{\rotatebox[origin=c]{90}{MNIST}}
 & \multirow{3}{*}{OSP}& $\mathcal{L}^{\mathrm{sym}}$& 3.587& 3.722& 3.830& 3.879& 3.902& 3.914& \textbf{3.944}& \textbf{3.957}& \textbf{3.964}\\ 
& &  $\mathcal{L}^{\mathrm{as}}$ (s.in) & \textbf{3.619}& \textbf{3.745}& \textbf{3.841}& \textbf{3.885}& 3.903& 3.913& 3.936& 3.945& 3.951\\ 
& &  $\mathcal{L}^{\mathrm{as}}$& 3.589& 3.714& 3.811& 3.854& 3.870& 3.879& 3.897& 3.904& 3.907\\
  \cline{2-12}& \multirow{3}{*}{$\log D_{\mathrm{KL}}$}& $\mathcal{L}^{\mathrm{sym}}$& -0.31& -0.65& -1.06& -1.36& \textbf{-1.53}& \textbf{-1.65}& \textbf{-1.99}& \textbf{-2.22}& \textbf{-2.39}\\ 
& & $\mathcal{L}^{\mathrm{as}}$  (s.in) & -0.33& -0.67& -1.06& -1.34& -1.49& -1.58& -1.85& -1.98& -2.08\\ 
& & $\mathcal{L}^{\mathrm{as}}$& -0.29& -0.59& -0.94& -1.15& -1.26& -1.32& -1.45& -1.51& -1.54\\ 
\hline
\multirow{6}{*}{\rotatebox[origin=c]{90}{CIFAR-10}}  & \multirow{3}{*}{OSP}& $\mathcal{L}^{\mathrm{sym}}$& 2.656& 2.967& 3.361& 3.536& 3.620& 3.661& \textbf{3.775} & \textbf{3.823}& \textbf{3.854}\\ 
& & $\mathcal{L}^{\mathrm{as}}$  (s.in) & 2.707& 3.019& 3.382& 3.548& 3.614& 3.652& 3.746& 3.778& 3.800\\ 
& & $\mathcal{L}^{\mathrm{as}}$& 2.698& 3.068& 3.383& 3.493& 3.552& 3.576& 3.620& 3.639& 3.649\\ 
\cline{2-12} & \multirow{3}{*}{$\log D_{\mathrm{KL}}$}& $\mathcal{L}^{\mathrm{sym}}$& 0.52& 0.25& -0.17& -0.46& \textbf{-0.64}& \textbf{-0.74}& \textbf{-1.10}& \textbf{-1.32}& \textbf{-1.49}\\ 
& & $\mathcal{L}^{\mathrm{as}}$  (s.in) & 0.52& 0.24& -0.17& -0.44& -0.58& -0.68& -0.96& -1.08& -1.16\\ 
& &  $\mathcal{L}^{\mathrm{as}}$& 0.54& 0.26& -0.09& -0.27& -0.37& -0.42& -0.51& -0.56& -0.58\\ 
\hline 
 \end{tabular}
    \caption{Results for the $C_4$ experiment. Shown are the metrics measured for different ensemble sizes at the last epoch of training. (s.in) refers to the symmetrical initialization. The numbers presented are means of 30 bootstrapped ensembles of the respective sizes. Bold results that are significantly better (as measured by a $t$-test, $p<.001$) than all other models at the respective metric and size.}
    \label{tab:C4}
\end{table}

\begin{table}[h]
\begin{tabular}{|  c | c | c |c | c |c |c|c|c|c|c|c|c |c|} 
 \hline &Metric & Model & 5 & 10 & 25 & 50 & 75 & 100 & 250 & 500 & 1000 \\ 
\hline \multirow{8}{*}{\rotatebox[origin=c]{90}{MNIST}} 
&\multirow{4}{*}{OSP}& $\mathcal{L}^{\mathrm{sym}}$& 8.56& 9.70& 10.46& 10.98& 11.07& 11.25& 11.34& 11.43& 11.44\\ 
& & $\mathcal{L}^{\mathrm{as}}$ (s.in) & 8.84& 10.10& 11.10& 11.55& 11.77& 11.81& 12.03& 12.10& 12.13\\ 
& & $\mathcal{L}^{\mathrm{as}}$& 8.86& 10.06& 10.97& 11.48& 11.65& 11.68& 11.95& 11.97& 12.02\\ 
& & $\mathcal{L}^{\mathrm{cnn}}$& \textbf{10.89}& \textbf{12.24}& \textbf{13.53}& \textbf{14.13}& \textbf{14.34}& \textbf{14.48}& \textbf{14.75}& \textbf{14.87}& \textbf{14.92}\\ 
\cline{2-12} & \multirow{4}{*}{$\log D_{\mathrm{KL}}$} & $\mathcal{L}^{\mathrm{sym}}$& 0.72& 0.56& 0.42& 0.35& 0.34& 0.31& 0.31& 0.29& 0.29\\ 
& & $\mathcal{L}^{\mathrm{as}}$(s.in)& 0.66& 0.48& 0.32& 0.24& 0.20& 0.20& 0.17& 0.15& 0.15\\ 
& & $\mathcal{L}^{\mathrm{as}}$& 0.66& 0.48& 0.33& 0.25& 0.22& 0.22& 0.18& 0.17& 0.16\\ 
& & $\mathcal{L}^{\mathrm{cnn}}$& \textbf{0.39}& \textbf{0.11}& \textbf{-0.25}& \textbf{-0.49}& \textbf{-0.59}& \textbf{-0.67}& \textbf{-0.82}& \textbf{-0.89}& \textbf{-0.93}\\ 
\hline \multirow{8}{*}{\rotatebox[origin=c]{90}{CIFAR-10}} 
 & \multirow{4}{*}{OSP}& $\mathcal{L}^{\mathrm{sym}}$& 6.14& 6.76& 8.21& 8.40& 8.58& 8.73& 8.87& 9.00& 8.99\\ 
& & $\mathcal{L}^{\mathrm{as}}$ (s.in)& 6.04& 7.08& 8.29& 8.90& 8.75& 8.99& 9.10& 9.17& 9.21\\ 
& & $\mathcal{L}^{\mathrm{as}}$& 5.86& 7.20& 8.32& 8.47& 8.71& 8.68& 9.01& 9.06& 9.02\\ 
& & $\mathcal{L}^{\mathrm{cnn}}$& \textbf{8.05}& \textbf{9.79}& \textbf{11.84}& \textbf{12.47}& \textbf{12.70}& \textbf{12.90}& \textbf{13.13}& \textbf{13.24}& \textbf{13.31}\\ 
\cline{2-12} & \multirow{4}{*}{$\log D_{\mathrm{KL}}$}& $\mathcal{L}^{\mathrm{sym}}$& 0.99& 0.88& 0.70& 0.66& 0.64& 0.63& 0.61& 0.60& 0.60\\ 
& & $\mathcal{L}^{\mathrm{as}}$ (s.in)& 0.98& 0.83& 0.67& 0.60& 0.60& 0.57& 0.56& 0.55& 0.54\\ 
& & $\mathcal{L}^{\mathrm{as}}$& 1.00& 0.83& 0.69& 0.64& 0.60& 0.60& 0.57& 0.56& 0.56\\ 
& & $\mathcal{L}^{\mathrm{cnn}}$& \textbf{0.88}& \textbf{0.68}& \textbf{0.44}& \textbf{0.36}& \textbf{0.32}& \textbf{0.30}& \textbf{0.27}& \textbf{0.26}& \textbf{0.25}\\ 
\hline 
 \end{tabular}
    \caption{Results for the $C_{16}$ experiment with \texttt{BILINEAR} interpolation. Shown are the metrics measured for different ensemble sizes at the last epoch of training. (s.in) refers to the symmetrical initialization. The numbers presented are means of 30 bootstrapped ensembles of the respective sizes. Bold results that are significantly better (as measured by a $t$-test, $p<.001$) than all other models at the respective metric and size}
    \label{tab:C16bilinear}
\end{table}

\section{PointNet experiment details} \label{app:pndetails}

\subsection{Hardware}
The training was performed on a cluster using NVIDIA Tesla T4 GPUs with 16GB RAM per unit. The total compute time for training all 1000 individual models for $10$ epochs each is estimated at $\sim 45$ hours. We then generated model outputs for 100 rotations of test data from two different datasets on the same cluster, but instead using NVIDIA Tesla A40 GPUs with 64GB RAM per unit. This inference used an estimated total of $\sim 60$ compute hours. These estimates do not include test runs during debugging etc.
\subsection{Licenses}
ModelNet 10 and ModelNet 40 do not have any formal license, but are both available for research purposes from \hyperlink{https://modelnet.cs.princeton.edu/}{https://modelnet.cs.princeton.edu/}.

\subsection{Architecture description}
PointNet layers process data defined on graphs. Graphs are sets of nodes $[n]$, out of which some are connected via edges. The edge structure can either be encoded as a set $E$ of pairs of indices, or by an adjacency matrix $A\in \R^{n,n}$. The layer is parametrized by two functions -- the local function $h$ and the global function $\gamma$. Features $x_i \in \R^m$ and positions $p_i\in \R^d$ of the points in the graph are processed by first getting sent individually through the local function, then averaged over neighborhoods $N_i$ in the graph, and then being sent through the global function, to produce a new set of features:
\begin{align*}
    x_i = \gamma\left( \sum_{j \in N_i } h(x_i,p_j-p_i)\right)
\end{align*}
Notice here that we chose to use average aggregation instead of the default $\max$-aggregation to keep the layer differentiable.

The functions $h$ and $\gamma$ are free to choose. In our experiments, $h$ is a fully connected neural network of two layers of size $103$ and $200$, and $\gamma$ is a fully connected network with two layers of size $200$ and $300$. We use layer normalization between each pair of layers in the two neural networks, and choose $\tanh$ as the activation function. See also Figure \ref{fig:architecture_pn_components}.

\begin{figure}[!h]
        \centering
        \begin{tikzpicture}[scale=0.38]
            \draw (0,0) rectangle (1,3);
            \node at (-2,1.5) {\LARGE $h$};
            \node at (0.5,-1) {$\R^{100}\times \R^3$};
            \node at (4,1) {$\tanh$, \scriptsize{LN}};
            \node at (4,2) {Fully conn.};
            \draw (7,0) rectangle (8,3);
            \node at (7.5,-1) {$\R^{103}$};
            \node at (11,1) {$\tanh$, \scriptsize{LN}};
            \node at (11,2) {Fully conn};
            \draw (14,0) rectangle (15,3);
            \node at (14.5,-1) {$\R^{200}$};
            \draw[->] (1.5,1.5) -- (6.5,1.5);
            \draw[->] (8.5,1.5) -- (13.5,1.5);

            \draw (0,-6) rectangle (1,-3);
            \node at (-2,-4.5) {\LARGE $\gamma$};
            \node at (0.5,-7) {$\R^{200}$};
            \node at (4,-5) {$\tanh$, \scriptsize{LN}};
            \node at (4,-4) {Fully conn.};
            \draw (7,-6) rectangle (8,-3);
            \node at (7.5,-7) {$\R^{200}$};
            \node at (11,-5) {$\tanh$, \scriptsize{LN}};
            \node at (11,-4) {Fully conn};
            \draw (14,-6) rectangle (15,-3);
            \node at (14.5,-7) {$\R^{300}$};
            \draw[->] (1.5,-4.5) -- (6.5,-4.5);
            \draw[->] (8.5,-4.5) -- (13.5,-4.5);
            
            \draw (0,-12) rectangle (1,-9);
            \node at (-2,-10.5) {\LARGE $\kappa$};
            \node at (0.5,-13) {$\R^{300}$};
            \node at (4,-11) {$\tanh$};
            \node at (4,-10) {Fully conn.};
            \draw (7,-12) rectangle (8,-9);
            \node at (7.5,-13) {$\R^{200}$};
            \node at (11,-11) {$\tanh$};
            \node at (11,-10) {Fully conn};
            \draw (14,-12) rectangle (15,-9);
            \node at (14.5,-13) {$\R^{100}$};
            \node at (18,-10) {Fully conn.};
            \draw (21,-12) rectangle (22,-9);
            \node at (21,-13) {$\R^{10}$};
            \draw[->] (1.5,-10.5) -- (6.5,-10.5);
            \draw[->] (8.5,-10.5) -- (13.5,-10.5);
            \draw[->] (15.5,-10.5) -- (20.5,-10.5);
            \draw (0,-18) rectangle (1,-15);
            \node at (-2,-16.5) {\LARGE $
            \epsilon$};
            \node at (0.5,-19) {$\R^{3}$};
            \node at (4,-17) {$\tanh$};
            \node at (4,-16) {Fully conn.};
            \draw (7,-18) rectangle (8,-15);
            \node at (7.5,-19) {$\R^{50}$};
            \node at (11,-17) {$\tanh$};
            \node at (11,-16) {Fully conn};
            \draw (14,-18) rectangle (15,-15);
            \node at (14.5,-19) {$\R^{50}$};
            \node at (18,-16) {Fully conn.};
            \draw (21,-18) rectangle (22,-15);
            \node at (21,-19) {$\R^{100}$};
            \draw[->] (1.5,-16.5) -- (6.5,-16.5);
            \draw[->] (8.5,-16.5) -- (13.5,-16.5);
            \draw[->] (15.5,-16.5) -- (20.5,-16.5);
        \end{tikzpicture}
        
        \caption{The local $h$, global $\gamma$, 'head' $\kappa$, and $\epsilon$ embedding neural networks in our architecture. LN stands for LayerNorm.}
        \label{fig:architecture_pn_components}
\end{figure}

The entire net consists of three components. First, the positions $p_i$ are (nodewise) fed through an embedding network $\epsilon$ (fully connected, three layers of size $50$, $50$ and $100$) to define features $x_i$. These are then together with the $p_i$ ran through one PointNet convolution layer as above. These are then averaged over the graph and sent through a third three-layer MLP  $\kappa$ with layers of size $200$, $100$ and $10$. This MLP also uses the $\tanh$ non-linearity, but no layer normalization. See also Figure \ref{fig:architecture_pn_components}.

\subsection{The architecture and the \texorpdfstring{$\calL$}{L}-formalism} Let us  comment on how the PointNet convolution can be incorporated in the framework described in the paper, i.e. understood as an affine subset of all possible fully connected networks, and that the corresponding set of maps is $\SO(3)$-invariant. The construction is in essence the same as the one for general message passing in Appendix A of \cite{nordenfors2024optimizationdynamicsequivariantaugmented}, with the additional complication of multiple-layer $h$ and $\gamma$.  We only include it for completeness. We also formulate all neural network layers here as not having biases -- as is described \cite{nordenfors2024optimizationdynamicsequivariantaugmented}, such can routinely be included again by adding a 'dummy-dimension' to the input vectors and then describe the affine-linear maps as linear ones.

We understand the input of the network as triples of feature sequences, position sequences and adjacency matrix, say $\calU = (\R^{m})^{n}\oplus (\R^{d})^n \oplus \R^{n,n}$. The embedding network $\epsilon$ only acts non-trivially on the features and by the identity on the positions and adjacency matrix. Thus, the fully-connected layers of $\epsilon$ maps from $\calU$ to $\calU_1=(\R^{k})^{n}\oplus (\R^{d})^n \oplus \R^{n,n}$. The first layer of the $h$-network maps these onto a $n\times n$-array of features in, say $\R^{m_1}$ -- one vector per edge (to emulate the dependence of $h(x_i,p_j-p_i)$ on the relative position $(p_j-p_i)$) through some linear maps:
\begin{align}
    L(x,p)_{j,i} = Mx_i + N(p_j-p_i).
\end{align}
The linear maps obtained by varying $M$ and $N$ in $\R^{m_1,m+d}$ in this ways clearly forms a subspace of all linear maps between $\calU_1$ and $(\R^{m_1})^{n \times n}$. To be able to access the adjacency structure, we 'save it' by applying the identity mapping to it. To conclude, the first layer of the $h$ net is hence a linear map from $\calU_1$ to $\calU_2 = (\R^{m_1})^{n\times n} \oplus \R^{n,n}$ of the form
\begin{align*}
    (x,p,A) \mapsto (L(x,p),A).
\end{align*}
We then apply a non-linearity pointwise to the 'edge-feature', $(y,A) \mapsto (\sigma(y),A)$, thereafter another linearity of the form
\begin{align*}
    (y,A) \mapsto (Ly,A), \quad (Ly)_{k,j} = My_{k,j}, M\in \R^{m_{i+1},m_i},
\end{align*}
and so on. 

The next step is the averaging step, which we can interpret as a nonlinearity from $(\R^{m_{L}})^{n\times n} \times \R^{n,n}\to (\R^{m_L})^n$
\begin{align*}
    (y,A) \mapsto y_i = \sum_j a_{ij}y_{ij}
\end{align*}
The global net can then be treated just as above, where we now do not even need to worry about 'saving' the adjacency structure anymore. The admissible layers are again characterized by the fact that they act pointwise on each node $(Ly)_i = My_i, M\in \R^{m_{i+1},m_i}$. The final averaging and 'head-MLP' are also easy to treat.

If we equip the input space with the canonical action of $\SO(3)$, i.e. $\rho^{\mathrm{in}}(g)(x,p)_i = (gx_i,gp_i)$ (remember that we use the positions also as features), and all other layers with the trivial representation, the above structure describes a $G$-invariant space $\calL$. This follows essentially as in Example \ref{ex:prepost}: for all but the very first layer, the lifted representation $\rho$ is trivial, so that invariance of those parts are trivial. The first layer of $\epsilon$ is still a fully-connected layer after applying the lifted representation, which deals with the rotations of the features. As for the first layer of $h$, it is not hard to see that the lifted representation here maps the admissible layer defined by $M$ and $N$ to the one defined by $M, Ng^T$, so that it is in particular still admissible.

\subsection{Detailed results for different ensemble sizes}
The results from the point cloud experiment are presented in Table \ref{tab:modelnet}

\begin{table}[h]
\begin{tabular}{|c|c|c|c|c|c|c|c|c|c|c|}
\hline  Dataset & Metric & 5 & 10 & 25 & 50 & 100 & 250 & 500 & 1000 \\
 \hline \multirow{2}{*}{In dist.} & OSP & 88.48 & 92.43 & 94.55 & 96.23 & 97.09 & 97.87 & 98.21 & 98.49 \\
\cline{2-10} &  $\log(\text{KL-DIV})$ & -1.45 & -1.80 & -2.17 & -2.49 & -2.76 & -3.09 & -3.33 & -3.51\\
 \hline \multirow{2}{*}{Out of dist.} & OSP  & 84.41 & 90.83 & 94.60 & 96.39 & 97.50 & 98.59 & 99.10 & 99.32 \\
\cline{2-10} & $\log(\text{KL-DIV})$ & -0.97 & -1.29 & -1.67 & -1.97 & -2.25 & -2.58 & -2.81 & -2.99 \\
 \hline \end{tabular}
 \caption{\label{tab:modelnet} Mean metrics after 10 epochs. Each number is the mean of 30 bootstrapped ensembles of the respective size.}
 \end{table}

\subsection{A comment on the high equivariance of small ensembles}\label{app:pndetails7}
When training the PointNets on augmented data, the random rotations made the learning task too difficult for our simple training procedure. Thus, many of the trained models suffer from overfitting, predicting classes 2 and 7 for most samples, since these two classes are overrepresented in the dataset. However, this does not interfere with the point of the experiment, which is to test the equivarizing effect of ensembling. It would only be a problem for the experiment if the models predicted only a single class for every sample, since this would immediately make it perfectly equivariant.
\section{Additional experiments}

\begin{table}[h]
\begin{tabular}{|c|c|c|c|c|c|c|c|c|c|c|c|c|c|c|} 
 \hline & Metric & Model & 5 & 10 & 25 & 50 & 75 & 100 & 250 & 500 & 1000 \\ 
\hline \multirow{6}{*}{\rotatebox[origin=c]{90}{MNIST}} & \multirow{3}{*}{OSP}& $\mathcal{L}^{\mathrm{sym}}$& 11.02& 12.42& 13.67& 14.25& \textbf{14.52}& \textbf{14.67}& \textbf{14.99}& \textbf{15.11}& \textbf{15.19}\\ 
& & $\mathcal{L}^{\mathrm{as}}$ (s.in)& 11.15& 12.50& 13.68& 14.24& 14.46& 14.57& 14.83& 14.93& 14.98\\ 
& & $\mathcal{L}^{\mathrm{as}}$& 11.18& 12.54& 13.71& 14.24& 14.46& 14.59& 14.84& 14.94& 15.00\\ 
\cline{2-12}  & \multirow{3}{*}{$\log D_{\mathrm{KL}}$}& $\mathcal{L}^{\mathrm{sym}}$& 0.28& -0.02& \textbf{-0.40}& \textbf{-0.66}& \textbf{-0.80}& \textbf{-0.89}& \textbf{-1.12}& \textbf{-1.24}& \textbf{-1.31}\\ 
& & $\mathcal{L}^{\mathrm{as}}$ (s.in)& 0.29& -0.01& -0.37& -0.61& -0.73& -0.79& -0.97& -1.04& -1.08\\ 
& & $\mathcal{L}^{\mathrm{as}}$& 0.28& -0.01& -0.37& -0.61& -0.72& -0.80& -0.97& -1.05& -1.09\\ 
\hline \multirow{6}{*}{\rotatebox[origin=c]{90}{CIFAR-10}} & \multirow{3}{*}{OSP}& $\mathcal{L}^{\mathrm{sym}}$& 4.76& 5.24& 5.14& 5.32& 5.41& 5.32& 5.47& 5.48& 5.50\\ 
& & $\mathcal{L}^{\mathrm{as}}$ (s.in)& 4.90& 5.29& 5.33& 5.47& 5.49& 5.56& 5.54& 5.57& 5.57\\ 
& & $\mathcal{L}^{\mathrm{as}}$& 4.76& 5.07& 5.51& 5.56& 5.55& 5.57& 5.60& 5.59& 5.62\\ 
\cline{2-12} & \multirow{3}{*}{$\log D_{\mathrm{KL}}$}& $\mathcal{L}^{\mathrm{sym}}$& 1.11& 1.02& 0.98& 0.94& 0.92& 0.93& 0.91& 0.91& 0.91\\ 
& & $\mathcal{L}^{\mathrm{as}}$ (s.in)& 1.09& 0.99& 0.93& 0.90& 0.88& 0.87& 0.86& 0.86& 0.86\\ 
& & $\mathcal{L}^{\mathrm{as}}$& 1.10& 1.01& 0.91& 0.88& 0.88& 0.86& 0.85& 0.85& 0.85\\ 
\hline 
 \end{tabular}
    \caption{Results for the $C_{16}$ experiment with \texttt{NEAREST} interpolation. Shown are the metrics measured for different ensemble sizes at the last epoch of training. The numbers presented are means of 30 bootstrapped ensembles of the respective sizes. Bold results that are significantly better (as measured by a $t$-test, $p<.001$) than all other models at the respective metric and size.}
    \label{tab:C16nearest}
\end{table}

\subsection{\texorpdfstring{$C_{16}$}{C16} with \texorpdfstring{\texttt{NEAREST}}{NEAREST} interpolation} \label{app:nearest}
\begin{figure}[!h]
    \centering
    \includegraphics[width=0.85\linewidth]{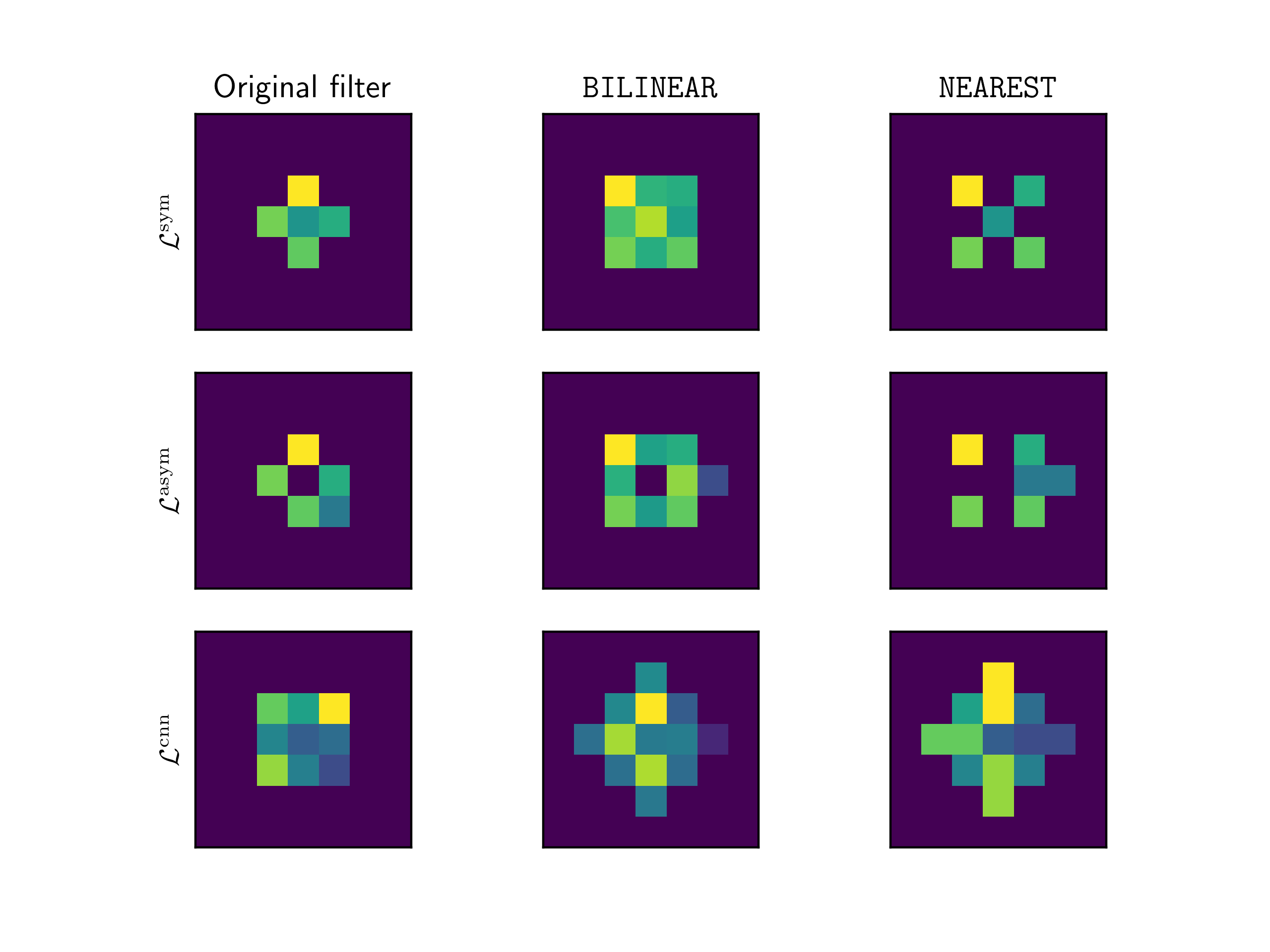}
    \caption{The effects of different types of interpolation on different types of filters. Filters in $\calLsym$, $\calLas$ and $\calL^{\mathrm{cnn}}$ are rotated 45 degrees using the $\texttt{BILINEAR}$ and $\texttt{NEAREST}$ interpolations.}
    \label{fig:filter_effects}
\end{figure}

We repeat the experiment in the main paper while using \texttt{NEAREST} interpolation scheme. The results are given in Figure \ref{fig:C16NEAREST}. Again, none of the models fare particularly well, as the theory suggests. Compared to the \texttt{BILINEAR} interpolation experiment, see Figure \ref{fig:C4} (right), we see that the models fare better on the MNIST data, but worse on the CIFAR data. We think that the reason for the better performance of the in-distribution data is that the \texttt{NEAREST} interpolation introduces fewer 'bluriness' artefacts than the \texttt{BILINEAR} one -- the former still operates by permuting pixels. This means that the test set formed by applying \texttt{BILINEAR}-rotations to MNIST results in a more diverse dataset -- which makes it simpler to fit to it.

We speculate that the reason for the worse performance on the CIFAR data is that the the operators $\Pi_\calL$ and $\rho_i(g)$ are even further from commuting in the $C_{16}$ case compared to the $C_4$ case -- see Figure \ref{fig:filter_effects} . 

\begin{figure}[!h]
    \begin{minipage}{.6\textwidth}
    \centering
    \includegraphics[width=0.85\linewidth]{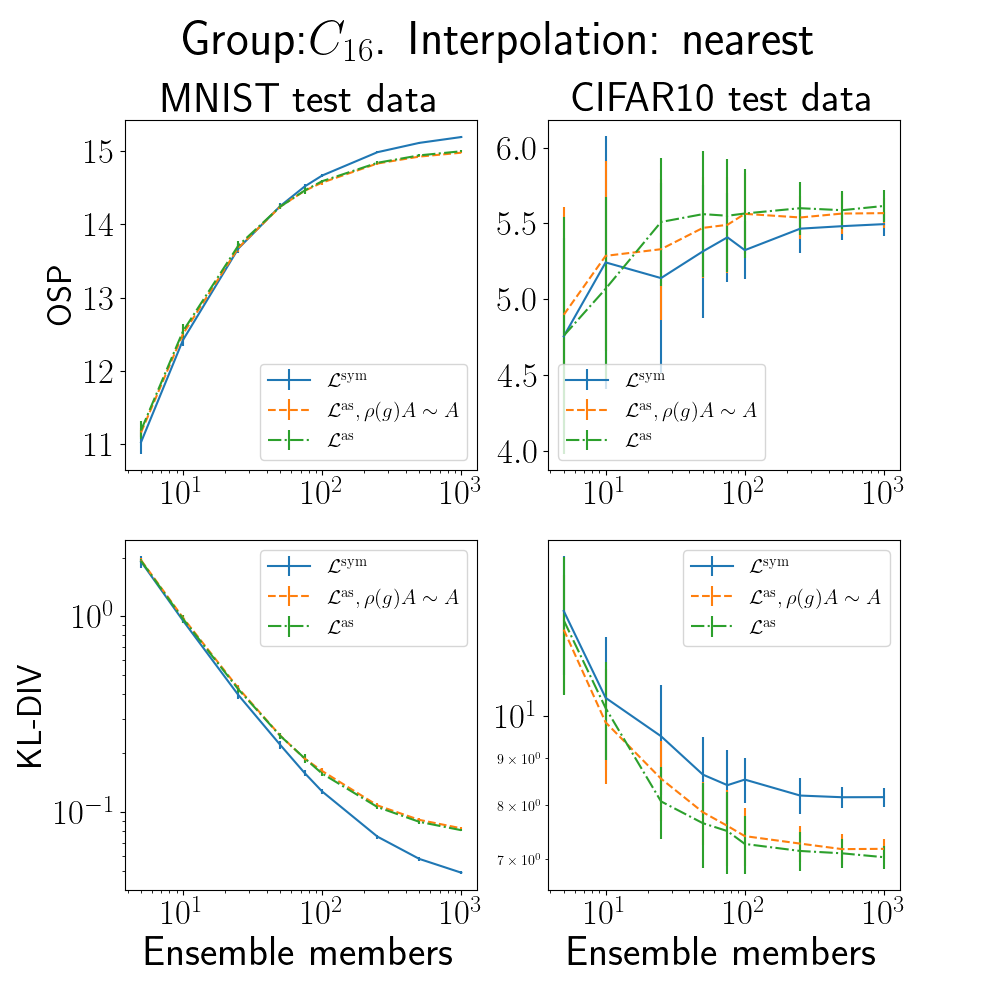}
    \end{minipage}
    \begin{minipage}{.35\textwidth}
    \caption{Metrics after the 10th epochs for different ensemble sizes for the $C_{16}$ experiment using the \texttt{NEAREST} interpolation. \\ Each datapoint is a mean of 30 bootstrapped examples -- the errorbars denotes one standard deviation of the bootstrap. \\ The $x$-scale in the top plots are logaritmic, both scales are logaritmic in the bottom plots. Best viewed in color. \label{fig:C16}}
    \label{fig:C16NEAREST}
    \end{minipage}
\end{figure}

\subsection{Experiment with larger filters}\label{app:approxerror}
We perform the same experiment as in Section \ref{sec:exp} with convolution filters of size $5\times 5$. The reason for performing this experiment is to test whether the 'small' error in the approximation $\rho(g)\Pi_\calL A\approx \Pi_\calL\rho(g) A$ in the case of $3\times 3$-filters is responsible for the small difference in performance between symmetric and asymmetric models. In a larger filter we can have more energy in the asymmetric parts of the filter, so we want to see if this affects the results. We consider therefore two CNNs with filters as in Figure \ref{fig:supports5x5}.
\begin{figure}[!h]
\begin{minipage}{.69\textwidth}
        \centering
        \begin{tikzpicture}[scale=0.5]
            \draw[step=1cm] (0,0) grid (5,5);
            \draw[fill=gray!50] (1,1) rectangle (2,2);
            \draw[fill=gray!50] (3,3) rectangle (4,4);
            \draw[fill=gray!50] (1,3) rectangle (2,4);
            \draw[fill=gray!50] (3,1) rectangle (4,2);
            \draw[fill=gray!50] (2,0) rectangle (3,1);
            \draw[fill=gray!50] (2,1) rectangle (3,2);
            \draw[fill=gray!50] (2,3) rectangle (3,4);
            \draw[fill=gray!50] (2,4) rectangle (3,5);
            \draw[fill=gray!50] (0,2) rectangle (1,3);
            \draw[fill=gray!50] (1,2) rectangle (2,3);
            \draw[fill=gray!50] (2,2) rectangle (3,3);
            \draw[fill=gray!50] (3,2) rectangle (4,3);
            \draw[fill=gray!50] (4,2) rectangle (5,3);
            \draw[step=1cm] (10,0) grid (15,5);
            \draw[fill=gray!50] (10,3) rectangle (11,4);
            \draw[fill=gray!50] (10,4) rectangle (11,5);
            \draw[fill=gray!50] (11,3) rectangle (12,4);
            \draw[fill=gray!50] (11,4) rectangle (12,5);
            \draw[fill=gray!50] (12,0) rectangle (13,1);
            \draw[fill=gray!50] (12,1) rectangle (13,2);
            \draw[fill=gray!50] (12,3) rectangle (13,4);
            \draw[fill=gray!50] (12,4) rectangle (13,5);
            \draw[fill=gray!50] (10,2) rectangle (11,3);
            \draw[fill=gray!50] (11,2) rectangle (12,3);
            \draw[fill=gray!50] (12,2) rectangle (13,3);
            \draw[fill=gray!50] (13,2) rectangle (14,3);
            \draw[fill=gray!50] (14,2) rectangle (15,3);
        \end{tikzpicture}
      \end{minipage}
      \begin{minipage}{.3\textwidth}
        \caption{Left: symmetric filter. Right: asymmetric filter. Grey indices correspond to indices where the filter is supported.}
        \label{fig:supports5x5}
    \end{minipage}
\end{figure}

As before, we train under augmentation by multiples $\pi/2$ radian rotations of our input images, and to keep the dimensions of input and output the same, we have to modify the convolution to have 2 rows of zero padding. The asymmetric filters are initialized with a $G$-invariant distribution by setting the four indices in the top left corner to 0, in addition to initializing with a standard normal distribution.

The results from this experiment can be seen in Figure \ref{fig:cross_vs_skew5x5} and Figure \ref{fig:cross_vs_skew5x5kl}, where we show the results of the experiment in Section \ref{sec:exp} and the experiment in this section side-by-side. We also give the metrics after epoch $10$ in Table \ref{tab:results5x5}

We can see that the difference in performance, with regards to the metrics of OSP and divergence, between the symmetric and asymmetric models is larger in the case of the $5\times 5$--filters, which is what one would predict if the reason for the size of the discrepancy between symmetric and asymmetric models is the size of the error when approximating $\rho(g)\Pi_\calL A\approx \Pi_\calL\rho(g) A$.
\begin{figure}[!h]
    \centering
    \includegraphics[width=.9\linewidth]{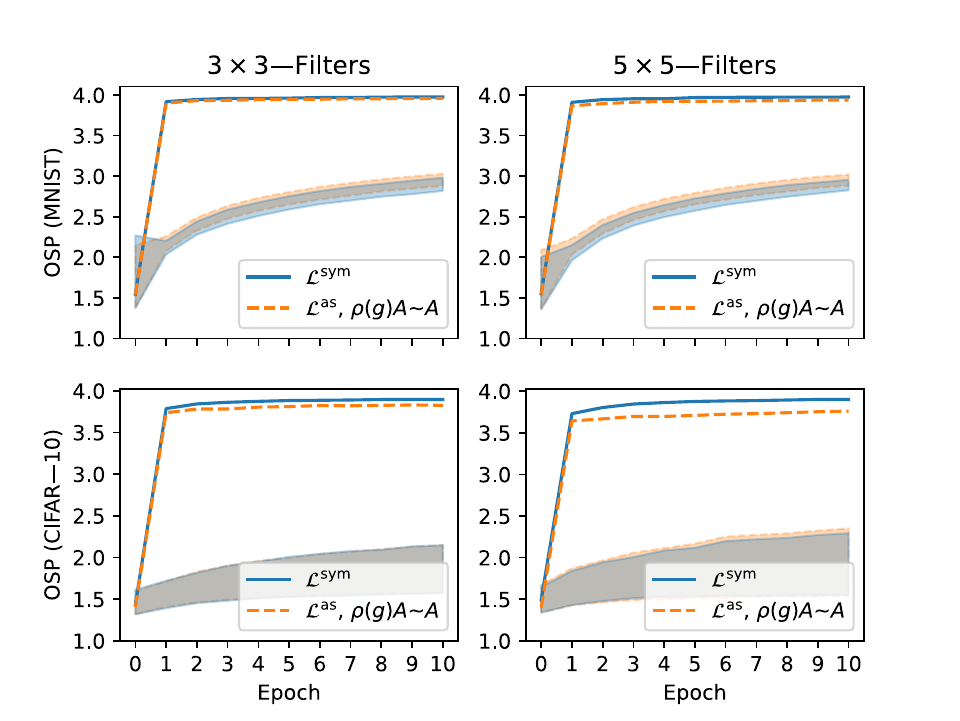}
    \caption{Top: OSP on MNIST test data for ensembles with $1000$ members (higher is better). The middle $95\%$ of individual ensemble members are within the shaded area.\\
    Bottom: OSP on CIFAR--10 test data for ensembles with $1000$ members (higher is better). The middle $95\%$ of individual ensemble members are within the shaded area. Best viewed in color.}
    \label{fig:cross_vs_skew5x5}
\end{figure}
\begin{figure}[!h]
    \centering
    \includegraphics[width=.9\linewidth]{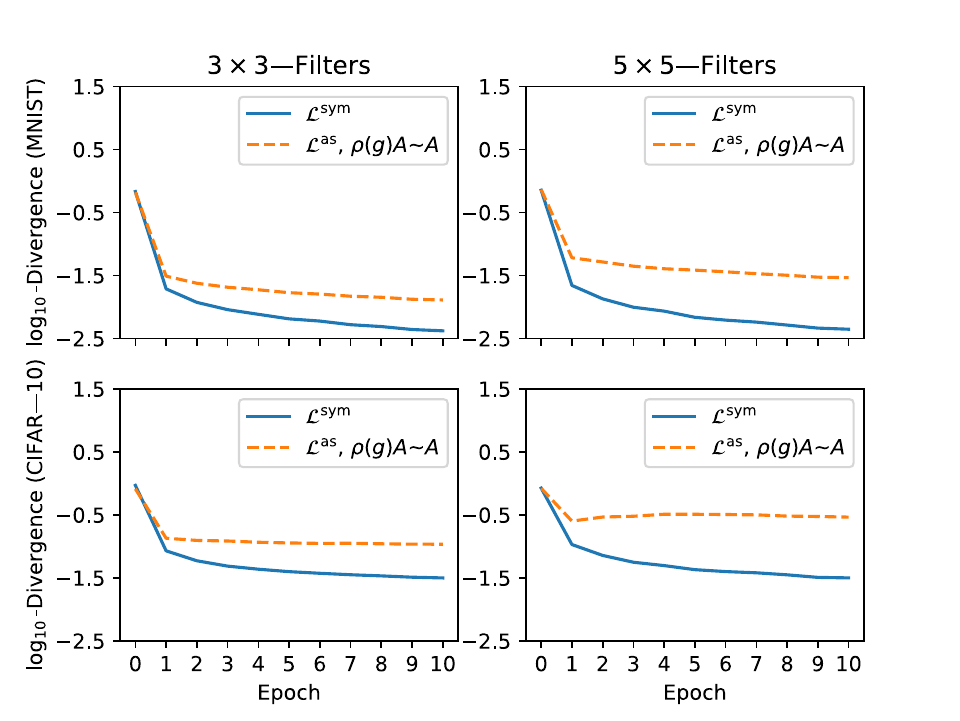}
    \caption{Top: Kullback--Leibler divergence on MNIST test data for ensembles with $1000$ members (lower is better).\\
    Bottom: Kullback--Leibler divergence on CIFAR--10 test data for ensembles with $1000$ members (lower is better). Best viewed in color.}
    \label{fig:cross_vs_skew5x5kl}
\end{figure}
\begin{table}[h]
    \centering
    \begin{tabular}{||c | c | c | c | c||}
        \hline
        \multicolumn{1}{||c}{} & \multicolumn{2}{|c}{MNIST} & \multicolumn{2}{|c||}{CIFAR--10} \\
        \hline
        Model & OSP & $\log D_{\mathrm{KL}}$ & OSP & $\log D_{\mathrm{KL}}$ \\ [0.5ex] 
        \hline\hline
        $\calLsym$ & $\mathbf{3.97}$ & $\mathbf{-2.35}$ & $\mathbf{3.90}$  & $\mathbf{-1.50}$\\ 
        \hline
        $\calLas$, $\rho(g)A\sim A$  & $3.94$  & $-1.54$ & $3.76$  & $-0.54$ \\
        \hline
    \end{tabular}
    \caption{Metrics after the $10$\ts{th} epoch of training for ensembles with $1000$ members using $5\times 5$-filters. Standard deviations are over test data.}
    \label{tab:results5x5}

\end{table}

\end{appendices}
\clearpage
\bibliography{ensembles_equivariant}
\end{document}